
\documentclass[shortAfour,sageh,doublespace,times]{sagej} %

\pdfoutput=1

\newcommand{\UTIAStitle}{Batch Informed Trees (BIT*): Informed Asymptotically Optimal Anytime Search}
\newcommand{\UTIASauthor}{Gammell et al.}
\usepackage{times}
\usepackage{amsmath,amssymb,amsfonts,amscd,amsthm}
\usepackage{mathtools} %
\usepackage{natbib} %
\usepackage{url}
\usepackage{graphicx}
\usepackage{microtype} %
\usepackage{calc} %

\usepackage[printonlyused,nolist]{acronym} %
\usepackage[usenames,dvipsnames]{xcolor} %
\usepackage[ruled,vlined,linesnumbered]{algorithm2e}  %

\usepackage{paralist} %
\usepackage{placeins} %
\usepackage{afterpage} %
\usepackage{enumitem} %
\setlist{nosep} %

\usepackage{soul}   %

\usepackage{tikz}

\usepackage[pdftex,colorlinks]{hyperref}
\usepackage{hypernat} %
\usepackage{bookmark} %
\usepackage[all]{hypcap} %

\SetAlgoSkip{}

\usetikzlibrary{decorations.pathreplacing,calc}

\hypersetup{%
    pdftitle={\UTIASauthor: \UTIAStitle},
    pdfauthor={\UTIASauthor},
    pdfkeywords={},
    pdfsubject={},
    pdfstartview=FitH,%
    breaklinks=true,%
    colorlinks=true,%
    linkcolor=Black,%
    citecolor=Black,%
    urlcolor=Black,%
    anchorcolor=Black,%
    filecolor=Black,%
    menucolor=Black,%
    runcolor=Black %
}%
\theoremstyle{plain}
\newtheorem{lem}{Lemma}
\newtheorem{thm}[lem]{Theorem} %
\newtheorem{defn}[lem]{Definition} %
\newcommand{\squeezeWidowedWords}{\looseness=-1}
\newcommand{\squeezeWidowedLine}{\enlargethispage*{\baselineskip}\pagebreak}
\newcommand{\escapeParagraph}{\vspace{1.5ex}}

\newcommand{\figureCaptionStyle}[0]{}  %

\newcommand{\algorithmStyle}[0]{\algorithmCapStart} %

\newcommand{\algo}[1]{Alg.~\ref{#1}}
\newcommand{\algos}[2]{Algs.~\ref{#1}--\ref{#2}}
\newcommand{\algoAnd}[2]{Algs.~\ref{#1} and \ref{#2}}
\newcommand{\algoline}[2]{\algo{#1}, Line~\ref{#1:#2}}
\newcommand{\algolines}[3]{\algo{#1}, Lines~\ref{#1:#2}--\ref{#1:#3}}

\newcommand{\citeParenthetical}[2]{\citep[#1;][]{#2}}
\newcommand{\citeParentheticalTwice}[4]{(\citealp[#1;][]{#2} and \citealp[#3;][]{#4})}

\newcommand{\citeAcronymDefn}[2]{\acl{#1} \citeParenthetical{\acs{#1}}{#2}\acused{#1}}
\newcommand{\citeAcronymDefnP}[2]{\aclp{#1} \citeParenthetical{\acs{#1}}{#2}\acused{#1}}
\newcommand{\citeExample}[1]{\citep[e.g.,][]{#1}}
\newcommand{\citeCompare}[1]{\citep[cf.][]{#1}}

\newcommand{\newAlgoLine}[1]{ {\color{BrickRed} #1} }

\let\oldnl\nl%
\newcommand{\skipln}{\renewcommand{\nl}{\let\nl\oldnl}}

\newcommand{\algorithmCapStart}[0]%
{%
    \let\oldhypcapspace\hypcapspace%
    \renewcommand{\hypcapspace}{1.5\baselineskip}%
    \capstart%
    \renewcommand{\hypcapspace}{\let\hypcapspace\oldhypcapspace}%
}
\makeatletter
\let\original@algocf@latexcaption\algocf@latexcaption
\long\def\algocf@latexcaption#1[#2]{%
  \@ifundefined{NR@gettitle}{%
    \def\@currentlabelname{#2}%
  }{%
    \NR@gettitle{#2}%
  }%
  \original@algocf@latexcaption{#1}[{#2}]%
}
\makeatother

\newcommand{\tikzmark}[1]{\tikz[overlay,remember picture] \node (#1) {};}

\newcommand*{\AddNote}[4]%
{%
    \begin{tikzpicture}[overlay, remember picture]
        \draw [decoration={brace,amplitude=0.5em},decorate,thick]
            ($([xshift=5pt]#3)!([yshift=5pt]#1.north)!($([xshift=5pt]#3)-(0,1)$)$) --  
            ($([xshift=5pt]#3)!(#2.south)!($([xshift=5pt]#3)-(0,1)$)$)
                node [align=center, text width=1.0cm, pos=0.5, anchor=center, fill=white] {#4};
    \end{tikzpicture}
}%

\DeclareMathOperator*{\argmin}{arg\,min}

\newcommand{\bbm}{\begin{bmatrix}}
\newcommand{\ebm}{\end{bmatrix}}

\newcommand{\norm}[2]{\left\| #1 \right\|_{#2}}
\newcommand{\card}[1]{\left|#1\right|}

\newcommand{\probSymb}[0]{P}
\newcommand{\prob}[1]{\probSymb\left(#1\right)}

\newcommand{\gammaFunc}[1]{\Gamma\left(#1\right)}
\newcommand{\real}[0]{\mathbb{R}}
\newcommand{\positiveReal}[0]{\real_{\geq0}}

\newcommand{\pair}[2]{\left( #1, #2\right)}
\newcommand{\set}[1]{\left\lbrace #1\right\rbrace}
\newcommand{\setst}[2]{\set{#1\;\;\middle|\;\;#2}}

\newcommand{\seqset}[1]{\left(#1\right)}

\newcommand{\closure}[1]{\mathrm{cl}\left(#1\right)}

\newcommand{\cost}[0]{c}

\newcommand{\solutionCost}[0]{f}
\newcommand{\costToCome}[0]{g}
\newcommand{\costToGo}[0]{h}
\newcommand{\iter}[0]{i}
\newcommand{\counterj}[0]{j}
\newcommand{\counterk}[0]{k}

\newcommand{\batches}[0]{\ell}
\newcommand{\samplesPerBatch}[0]{m}
\newcommand{\dimension}[0]{n}

\newcommand{\totalSamples}[0]{q}
\newcommand{\radius}[0]{r}
\newcommand{\pathCostSymb}[0]{\cost}

\newcommand{\stateu}[0]{\mathbf{u}}
\newcommand{\statev}[0]{\mathbf{v}}

\newcommand{\statew}[0]{\mathbf{w}}
\newcommand{\statex}[0]{\mathbf{x}}
\newcommand{\statey}[0]{\mathbf{y}}

\newcommand{\rrewire}[0]{\radius_{\mathrm{rewire}}}
\newcommand{\rbitstar}[0]{\radius_{\mathrm{BIT}^*}}
\newcommand{\rbitstarmax}[0]{\radius_{\mathrm{max}}}
\newcommand{\rbitstarmin}[0]{\radius_{\mathrm{BIT}^*}^*}
\newcommand{\rrrtstar}[0]{\radius_{\mathrm{RRT}^*}}
\newcommand{\rrrtstarmin}[0]{\radius_{\mathrm{RRT}^*}^*}
\newcommand{\kbitstar}[0]{k_{\mathrm{BIT}^*}}
\newcommand{\kbitstarmin}[0]{k_{\mathrm{BIT}^*}^*}

\newcommand{\maxEdge}[0]{\eta}
\newcommand{\lebesgueSymb}[0]{\lambda}

\newcommand{\sampleDensity}[0]{\rho}
\newcommand{\pathSeq}[0]{\sigma}

\newcommand{\unitBall}[1]{\zeta_{#1}}

\newcommand{\uniformSymb}[0]{\mathcal{U}}
\newcommand{\uniform}[1]{\uniformSymb\left(#1\right)}
\newcommand{\setInsert}[0]{\xleftarrow{\scriptscriptstyle +}}
\newcommand{\setRemove}[0]{\xleftarrow{\scriptscriptstyle -}}
\newcommand{\lebesgue}[1]{\lebesgueSymb\left(#1\right)}

\newcommand{\stateSet}[0]{X}
\newcommand{\batchSet}[0]{Y}
\newcommand{\vertexSet}[0]{V}
\newcommand{\edgeSet}[0]{E}
\newcommand{\edgeCost}[0]{\cost}
\newcommand{\queue}[0]{\mathcal{Q}}

\newcommand{\treeGraph}[0]{\mathcal{T}}

\newcommand{\pathSet}[0]{\Sigma}
\newcommand{\bestPath}[0]{\pathSeq^{*}}

\newcommand{\pathCost}[1]{\pathCostSymb\left(#1\right)}

\newcommand{\samplePathBIT}[0]{\pathSeq_{\totalSamples,\mathrm{BIT}^*}}

\newcommand{\costFromBelow}[1]{\widehat{#1}}
\newcommand{\trueCost}[1]{#1}
\newcommand{\costFromAbove}[1]{#1_{\treeGraph}}
\newcommand{\gBelow}[1]{\costFromBelow{\costToCome}\left(#1\right)}
\newcommand{\gTrue}[1]{\trueCost{\costToCome}\left(#1\right)}
\newcommand{\gAbove}[1]{\costFromAbove{\costToCome}\left(#1\right)}
\newcommand{\hBelow}[1]{\costFromBelow{\costToGo}\left(#1\right)}

\newcommand{\fBelow}[1]{\costFromBelow{\solutionCost}\left(#1\right)}

\newcommand{\cBelow}[2]{\costFromBelow{\edgeCost}\left(#1,#2\right)}
\newcommand{\cTrue}[2]{\trueCost{\edgeCost}\left(#1,#2\right)}

\newcommand{\namedLebesgue}[1]{\lebesgueSymb_{#1}}
\newcommand{\phsMeasure}[0]{\namedLebesgue{\rm PHS}}
\newcommand{\sampleMeasure}[0]{\namedLebesgue{\rm sample}}
\newcommand{\namedSet}[1]{\stateSet_{#1}}

\newcommand{\fBelowSet}[0]{\namedSet{\costFromBelow{\solutionCost}}}

\newcommand{\obsSet}[0]{\namedSet{\rm obs}}
\newcommand{\freeSet}[0]{\namedSet{\rm free}}
\newcommand{\goalSet}[0]{\namedSet{\rm goal}}
\newcommand{\samplingSet}[0]{\namedSet{\rm sampling}}
\newcommand{\samplesSet}[0]{\namedSet{\rm samples}}
\newcommand{\unconnectedSet}[0]{\namedSet{\rm unconn}} %

\newcommand{\phsSet}[0]{\namedSet{\rm PHS}}

\newcommand{\nearSet}[0]{\namedSet{\rm near}}
\newcommand{\nearSetk}[0]{\namedSet{{\rm near},\counterk}}
\newcommand{\newSet}[0]{\namedSet{\rm new}}
\newcommand{\reuseSet}[0]{\namedSet{\rm reuse}}

\newcommand{\namedVertexSet}[1]{\vertexSet_{#1}}

\newcommand{\solnVertices}[0]{\namedVertexSet{\mbox{\rm\scriptsize sol'n}}}
\newcommand{\unexpandedVertices}[0]{\namedVertexSet{\rm unexpnd}} %
\newcommand{\delayedVertices}[0]{\namedVertexSet{\rm delayed}}
\newcommand{\nearVertices}[0]{\namedVertexSet{\rm near}}

\newcommand{\namedState}[1]{\statex_{#1}}
\newcommand{\xmin}[0]{\namedState{\rm min}}
\newcommand{\xstart}[0]{\namedState{\rm start}}
\newcommand{\xgoal}[0]{\namedState{\rm goal}}

\newcommand{\xrand}[0]{\namedState{\rm rand}}
\newcommand{\xnew}[0]{\namedState{\rm new}}
\newcommand{\namedVertex}[1]{\statev_{#1}}
\newcommand{\vgoal}[0]{\namedVertex{\rm goal}}

\newcommand{\vmin}[0]{\namedVertex{\rm min}}
\newcommand{\vnear}[0]{\namedVertex{\rm near}}
\newcommand{\vnearest}[0]{\namedVertex{\rm nearest}}
\newcommand{\vparent}[0]{\namedVertex{\rm parent}}

\newcommand{\namedCost}[1]{\cost_{#1}}

\newcommand{\ccur}[0]{\namedCost{\iter}}

\newcommand{\cnew}[0]{\namedCost{\rm new}}
\newcommand{\cmin}[0]{\namedCost{\rm min}}
\newcommand{\cnear}[0]{\namedCost{\rm near}}
\newcommand{\cedge}[0]{\namedCost{\rm edge}}
\newcommand{\creqd}[0]{\namedCost{\mbox{\rm\scriptsize req'd}}}
\newcommand{\csampled}[0]{\namedCost{\rm sampled}}

\newcommand{\namedQueue}[1]{\queue_{#1}}
\newcommand{\edgeQueue}[0]{\namedQueue{\edgeSet}}
\newcommand{\vertexQueue}[0]{\namedQueue{\vertexSet}}
\newcommand{\sampleQueue}[0]{\namedQueue{\rm Samples}}

\newcommand{\bestEdge}[0]{\pair{\vmin}{\xmin}}

\newcommand{\RRTsharp}{\texorpdfstring{\acs{RRT}\textsuperscript{\#}}{\acs{RRT}\#}}
\newcommand{\RRTx}{\texorpdfstring{\acs{RRT}\textsuperscript{X}}{\acs{RRT}x}}

\hyphenation{aero-space}
\hyphenation{auton-omous}

\setcounter{secnumdepth}{3}

\makeatletter
\def\@fnsymbol#1{\ensuremath{\ifcase#1\or \dagger\or \ddagger\or
   \mathsection\or \mathparagraph\or \|\or \dagger\dagger
   \or \ddagger\ddagger \else\@ctrerr\fi}}
\makeatother

\begin{document}
\begin{acronym}[Lazy Q-PRM]
    \acro{ASRL}{Autonomous Space Robotics Lab}
    \acro{CMU}{Carnegie Mellon University}
    \acro{CSA}{Canadian Space Agency}
    \acro{DRDC}{Defence Research and Development Canada}
    \acro{JPL}{Jet Propulsion Laboratory}
    \acro{KSR}{Koffler Scientific Reserve at Jokers Hill}
    \acro{MET}{Mars Emulation Terrain}
    \acro{MIT}{Massachusetts Institute of Technology}
    \acro{NASA}{National Aeronautics and Space Administration}
    \acro{NSERC}{Natural Sciences and Engineering Research Council of Canada}
    \acro{NCFRN}{\acs{NSERC} Canadian Field Robotics Network}
    \acro{NORCAT}{Northern Centre for Advanced Technology Inc.}
    \acro{ODG}{Ontario Drive and Gear Ltd.}
    \acro{ONR}{Office of Naval Research}
    \acro{USSR}{Union of Soviet Socialist Republics}
    \acro{UofT}{University of Toronto}
    \acro{UW}{University of Waterloo}
    \acro{UTIAS}{University of Toronto Institute for Aerospace Studies}

    \acro{ACPI}{advanced configuration and power interface}
    \acro{CLI}{command-line interface}
    \acro{GUI}{graphical user interface}
    \acro{JIT}{just-in-time}
    \acro{LAN}{local area network}
    \acro{MFC}{Microsoft foundation class}
    \acro{NIC}{network interface card}
    \acro{SDK}{software development kit}
    \acro{HDD}{hard-disk drive}
    \acro{SSD}{solid-state drive}

    \acro{ICRA}{IEEE International Conference on Robotics and Automation}
    \acro{IJRR}{International Journal of Robotics Research}
    \acro{IROS}{IEEE/RSJ International Conference on Intelligent Robots and Systems}
    \acro{RSS}{Robotics: Science and Systems Conference}
    \acro{TRO}{IEEE Transactions on Robotics}

    \acro{DOF}{degree-of-freedom}
        \acrodefplural{DOF}[DOFs]{degrees-of-freedom} %

        \acro{FOV}{field of view}
            \acrodefplural{FOV}[FOVs]{fields of view}
        \acro{HDOP}{horizontal dilution of position}
        \acro{UTM}{universal transverse mercator}
        \acro{WAAS}{wide area augmentation system}
        \acro{AHRS}{attitude heading reference system}
        \acro{DAQ}{data acquisition}
        \acro{DGPS}{differential global positioning system}
        \acro{DPDT}{double-pole, double-throw}
        \acro{DPST}{double-pole, single-throw}
        \acro{GPR}{ground penetrating radar}
        \acro{GPS}{global positioning system}
        \acro{LED}{light-emitting diode}
        \acro{IMU}{inertial measurement system}
        \acro{PTU}{pan-tilt unit}
        \acro{RTK}{real-time kinematic}
        \acro{R/C}{radio control}
        \acro{SCADA}{supervisory control and data acquisition}
        \acro{SPST}{single-pole, single-throw}
        \acro{SPDT}{single-pole, double-throw}
        \acro{UWB}{ultra-wide band}

    \acro{DDS}{Departmental Doctoral Seminar}
    \acro{DEC}{Doctoral Examination Committee}
    \acro{FOE}{Final Oral Exam}
    \acro{ICD}{Interface Control Document}

    \acro{iid}[i.i.d.]{independent and identically distributed}
    \acro{aas}[a.a.s.]{asymptotically almost-surely}
    \acro{asao}[a.s.a.o.]{almost-surely asymptotically optimal}
    \acro{RGG}{random geometric graph}
    \acro{BVP}{boundary-value problem}
    \acro{2BVP}[two-point BVP]{two-point \acl{BVP}}

    \acro{EKF}{extended Kalman filter}
    \acro{iSAM}{incremental smoothing and mapping}
    \acro{ISRU}{in-situ resource utilization}
    \acro{PCA}{principle component analysis}
    \acro{SLAM}{simultaneous localization and mapping}
    \acro{SVD}{singular value decomposition}
    \acro{UKF}{unscented Kalman filter}
    \acro{VO}{visual odometry}
    \acro{VTR}[VT\&R]{visual teach and repeat}

        \acro{ADstar}[AD*]{Anytime Dynamic A*}
        \acro{ARAstar}[ARA*]{Anytime Repairing A*}
        \acro{BI2RRTstar}[BI\textsuperscript{2}RRT*]{Bidirectional Informed \acs{RRTstar}}
        \acro{BITstar}[BIT*]{Batch Informed Trees}
            \acrodefplural{BITstar}[BIT*]{Batch Informed Trees}
        \acro{RABITstar}[RABIT*]{Regionally Accelerated \acs{BITstar}}
        \acro{BRM}{belief roadmap}
        \acro{CFOREST}[C-FOREST]{Coupled Forest of Random Engrafting Search Trees}
        \acro{CHOMP}{Covariant Hamiltonian Optimization for Motion Planning}
        \acro{Dstar}[D*]{Dynamic A*}
        \acro{EET}{Exploring/Exploiting Tree} %
        \acro{EST}{Expansive Space Tree}
            \acrodefplural{EST}[EST]{Expansive Space Trees}
        \acro{FMTstar}[FMT*]{Fast Marching Tree}
            \acrodefplural{FMTstar}[FMT*]{Fast Marching Trees}
        \acro{GVG}{generalized Voronoi graph}
        \acro{HRAstar}[HRA*]{Hybrid Randomized A*}
        \acro{hRRT}{Heuristically Guided \acs{RRT}}
        \acro{LBTRRT}[LBT-RRT]{Lower Bound Tree \acs{RRT}}
        \acro{LQG-MP}{linear-quadratic Gaussian motion planning}
        \acro{LPAstar}[LPA*]{Lifelong Planning A*}
        \acro{MHAstar}[MHA*]{Multi-Heuristic A*}
        \acro{MPLB}{Motion Planning Using Lower Bounds}
        \acro{MDP}{Markov decision process}
            \acrodefplural{MDP}[MDPs]{Markov decision processes}
        \acro{NRP}{network of reusable paths}
            \acrodefplural{NRP}[NRPs]{networks of reusable paths}
        \acro{POMDP}{partially observable \ac{MDP}}
            \acrodefplural{POMDP}[POMDPs]{partially observable \acp{MDP}}
        \acro{PRM}{Probabilistic Roadmap}
        \acro{PRMstar}[PRM*]{almost-surely asymptotically optimal \acs{PRM}}
            \acrodefplural{PRMstar}[PRM*]{almost-surely asymptotically optimal \acsp{PRM}}
        \acro{QPRM}[Q-PRM]{Quasi-random Roadmap}
            \acro{QLPRM}[QLPRM]{Quasi-random Lazy \acs{PRM}}
        \acro{RAstar}[RA*]{Randomized A*}
        \acro{RRBT}{rapidly-exploring random belief tree}
        \acro{RRG}{Rapidly exploring Random Graph}
        \acro{RRM}{Rapidly exploring Roadmap}
        \acro{RRT}{Rapidly exploring Random Tree}
        \acro{RRTstar}[RRT*]{almost-surely asymptotically optimal \acs{RRT}}
            \acrodefplural{RRTstar}[RRT*]{almost-surely asymptotically optimal \acsp{RRT}}
        \acro{SANDROS}{selective and non-uniformly delayed refinement of subgoals}
        \acro{SBAstar}[SBA*]{Sampling-based A*}
        \acro{SBL}{Single-query, Bidirectional, Lazy Collision-checking Planner}
        \acro{SORRTstar}[SORRT*]{Sorted \acs{RRTstar}}
        \acro{sPRM}[s-PRM]{simplified \acs{PRM}}
            \acrodefplural{sPRM}[s-PRM]{simplified \acsp{PRM}}
        \acro{SRT}{Sampling-based Roadmap of Trees}
            \acrodefplural{SRT}{Sampling-based Roadmaps of Trees}
        \acro{STOMP}{Stochastic Trajectory Optimization for Motion Planning}
        \acro{TRRT}[T-RRT]{Transition-based \acs{RRT}}
            \acro{TRRTstar}[T-RRT*]{Transition-based \acs{RRTstar}}
            \acro{ATRRT}[AT-RRT]{Anytime Transition-based \acs{RRT}}
        \acro{TrajOpt}{constrained trajectory optimization}

        \acro{ATDstar}[ATD*]{Anytime Truncated \acs{Dstar}}
        \acro{TDstarLite}[TD* Lite]{Truncated \acs{Dstar} Lite} 
        \acro{TLPAstar}[TLPA*]{Truncated \acs{LPAstar}}

    \acro{HERB}{Home Exploring Robot Butler}
    \acro{MER}{Mars Exploration Rover}
        \acrodefplural{MER}[MER]{Mars Exploration Rovers}
    \acro{MSL}{Mars Science Laboratory}
    \acro{OMPL}{Open Motion Planning Library}
    \acro{ROS}{the Robot Operating System}
    \acro{UAV}{unmanned aerial vehicle}

\end{acronym} %

\runninghead{\UTIASauthor}

\title{\UTIAStitle}

\author{%
Jonathan D.\ Gammell\affilnum{1}, %
Timothy D.\ Barfoot\affilnum{2}, and %
Siddhartha S.\ Srinivasa\affilnum{3}%
}

\affiliation{%
\affilnum{1}University of Oxford, Oxford, United Kingdom. Work performed while at the University of Toronto, Toronto, Canada\\%
\affilnum{2}University of Toronto, Toronto, Canada\\%
\affilnum{3}University of Washington, Seattle, USA%
}

\corrauth{Jonathan D.\ Gammell,
University of Oxford,
Oxford,
OX2~6NN, UK.}

\email{gammell@robots.ox.ac.uk}

\begin{abstract}
    \acused{DOF}
    \acused{RRTstar}
    \acused{PRMstar}

Path planning in robotics often requires finding high-quality solutions to continuously valued and/or high-dimensional problems.
These problems are challenging and most planning algorithms instead solve simplified approximations.
Popular approximations include graphs and random samples, as respectively used by informed graph-based searches and anytime sampling-based planners.

Informed graph-based searches, such as A*, traditionally use heuristics to search \textit{a priori} graphs in order of potential solution quality.
This makes their search efficient but leaves their performance dependent on the chosen approximation.
If its resolution is too low then they may not find a (suitable) solution but if it is too high then they may take a prohibitively long time to do so.

Anytime sampling-based planners, such as \ac{RRTstar}, traditionally use random sampling to approximate the problem domain incrementally.
This allows them to increase resolution until a suitable solution is found but makes their search dependent on the order of approximation.
Arbitrary sequences of random samples approximate the problem domain in every direction simultaneously and but may be prohibitively inefficient at containing a solution.

This paper unifies and extends these two approaches to develop \ac{BITstar}, an informed, anytime sampling-based planner.
\ac{BITstar} solves continuous path planning problems efficiently by using sampling and heuristics to alternately approximate and search the problem domain.
Its search is ordered by potential solution quality, as in A*, and its approximation improves indefinitely with additional computational time, as in \acs{RRTstar}.
It is shown analytically to be almost-surely asymptotically optimal and experimentally to outperform existing sampling-based planners, especially on high-dimensional planning problems.
\end{abstract}

\keywords{path planning, sampling-based planning, optimal path planning, heuristic search, informed search}

\maketitle

\acresetall %
\acused{DOF}
\acused{RRTstar}
\acused{PRMstar}

\begin{figure*}[tbp]
    \centering
    \includegraphics[page=2,width=\textwidth]{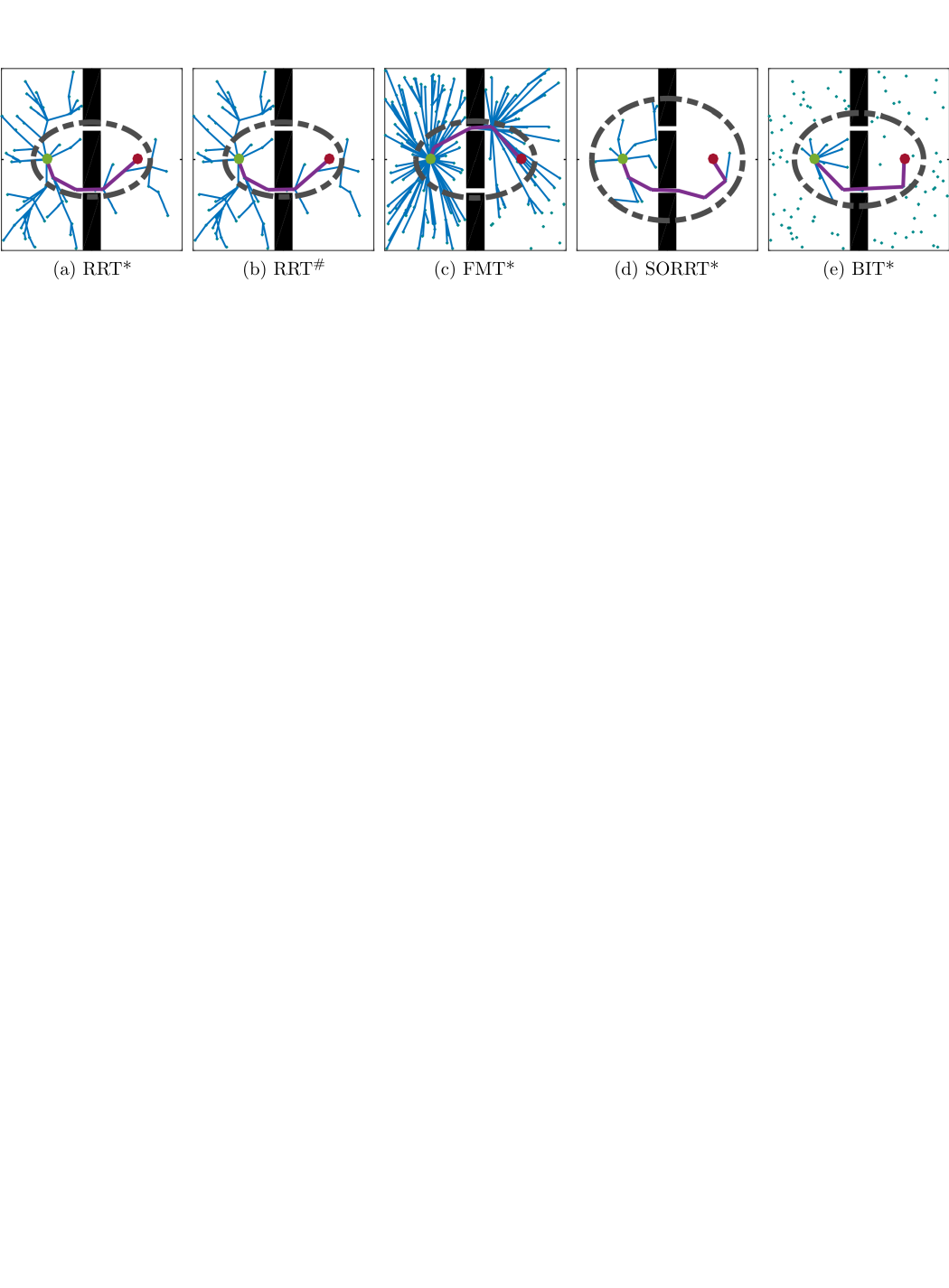}
    \caption
    {%
        \figureCaptionStyle%
        The relative \emph{efficiency} of the searches performed by \acs{RRTstar} (a), \RRTsharp{} (b), \acs{FMTstar} (c), and \acs{BITstar} (d) as illustrated by their initial solution and its resulting informed set.
        Each planner was run until an initial solution was found and then stopped.
        The resulting set of states that could provide a better solution is shown as a dark-grey dashed line.
        Search outside this informed set is provably unnecessary for the solution found by each planner and illustrates the inefficiency of the initial search of \acs{RRTstar}, \RRTsharp{}, and \acs{FMTstar}, (a)--(c).
        Note that \RRTsharp{} finds an initial solution from the same samples as \acs{RRTstar} as the heuristics presented in \citet{arslan_icra15} do not alter the search until a solution is found.
        Also note that by ordering its search on potential solution quality, \acs{BITstar} does not consider any samples that cannot provide the best solution in its current approximation (i.e., batch of samples).
        The start and goal states of the problem are shown in green and red, respectively, while the graph built by the planner is shown in blue and the initial solution is highlighted in purple.
    }
    \label{fig:order}
\end{figure*}%

\section{Introduction}\label{sec:intro}
A fundamental task of any robotic system operating autonomously in dynamic and/or unknown environments is the ability to navigate between positions.
The difficulty of this path planning problem depends on the number of possible robot positions (i.e., \emph{states}) that could belong to a solution (i.e., the size of the \emph{search space}).
This search space is often continuous since robots can be infinitesimally repositioned and also often unbounded (e.g., planning outdoors) and/or high dimensional (e.g., planning for manipulation).

Most global path planning algorithms reduce this search space by considering a countable subset of the possible states (i.e., a \emph{discrete approximation}).
This simplifies the problem but limits formal algorithm performance to the chosen discretization.
Popular discretizations in robotics include \textit{a priori} graph-{} and anytime sampling-based approximations.

\textit{A priori} graph-based approximations can be searched efficiently with informed algorithms, such as A* \citep{hart_tssc68}.
These informed graph-based searches not only find the optimal solution to a representation (i.e., they are \emph{resolution optimal}) but do so efficiently. %
For a chosen heuristic, A* is the \emph{optimally efficient} search of a given graph \citep{hart_tssc68}.

This efficient search makes informed graph-based algorithms effective on many continuous path planning problems despite a dependence on the chosen approximation.
Sparse approximations can be searched quickly but may only contain low quality \emph{continuous} solutions (if they contain any).
Dense approximations alternatively contain high-quality continuous solutions \citep{bertsekas_tac75} but may be prohibitively expensive to search.
Choosing the correct resolution \textit{a priori} to the search is difficult and is exacerbated by the exponential growth of graph size with state dimension \citeParenthetical{i.e., the \textit{curse of dimensionality}}{bellman_ams54,bellman_57}.

Anytime sampling-based approximations are instead built and searched simultaneously by sampling-based algorithms such as \citeAcronymDefnP{PRM}{kavraki_tro96}, \citeAcronymDefnP{RRT}{lavalle_ijrr01}, and \ac{RRTstar} \citep{karaman_ijrr11}.
These sampling-based planners incrementally improve their approximation of the problem domain until a (suitable) solution is found (i.e., they have \textit{anytime resolution}) and have probabilistic performance.
The probability that they find a solution, if one exists, goes to unity with an infinite number of samples (i.e., they are \emph{probabilistically complete}).
The probability that algorithms such as \ac{RRTstar} and versions of \ac{PRM} \citeParentheticalTwice{e.g., \acs{sPRM}}{kavraki_tra98}{\ac{PRMstar}}{karaman_ijrr11} asymptotically converge towards the optimum, if one exists, also goes to unity with an infinite number of samples \citeParenthetical{i.e., they are \emph{almost-surely asymptotically optimal}}{karaman_ijrr11}.
\squeezeWidowedWords

\begin{figure*}[tbp]
    \centering
    \includegraphics[width=\textwidth]{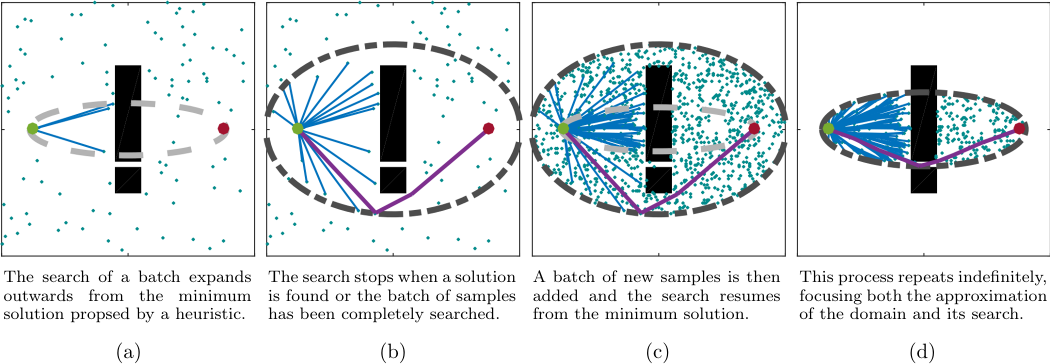}
    \caption[A simple example of the ordered sampling-based search performed by \acs{BITstar}.]
    {%
        \figureCaptionStyle%
        A simple example of the ordered sampling-based search performed by \acs{BITstar}.
        The start and goal states are shown in green and red, respectively, while the current solution is highlighted in purple.
        The set of states that can provide a better solution (the \emph{informed} set) is shown as a dark grey dashed line, while the progress of the current batch is shown as a light grey dashed line illustrating the informed set that the current edge would define.
        Fig.~(a) shows the growing search of the first batch of samples, and (b) shows the first search ending when a solution is found.
        Note that ordering the search finds a solution without considering any states outside the informed set it defines.
        Fig.~(c) shows the search continuing after pruning the representation and increasing its accuracy with a new batch of samples while (d) shows this second search ending when an improved solution is found.
    }
    \label{fig:bitstar}
\end{figure*}%

This anytime representation makes sampling-based planners effective on many continuous path planning problems despite a dependence on sampling.
Incremental planners, such as \acs{RRTstar}, generally search the problem domain in the order given by their random samples (Fig.~\ref{fig:order}).
This simultaneously expands the search into the entire sampling domain (i.e., it is \emph{space filling}) and almost-surely converges asymptotically to the optimal solution by asymptotically finding optimal paths to \emph{every} state.
This random order is inefficient and wastes computational effort on states that are not used to find a solution.
This may be prohibitively expensive in many real problems, including unbounded or high-dimensional environments.

Previous work combining the complementary advantages of these two approaches has been incomplete.
Sampling concepts have been added to informed graph-based search by either sacrificing anytime resolution or by decreasing efficiency by ordering the search on metrics other than solution cost.
Heuristic concepts have been added to sampling-based planners by either also sacrificing anytime resolution or by decreasing efficiency by only applying heuristics to some aspects of the search.

This paper demonstrates how informed graph-based search and sampling-based planning can be directly unified and extended without compromise.
It presents \ac{BITstar} as an example of an informed, anytime sampling-based planner that incrementally approximates continuous planning problems while searching in order of potential solution quality (Fig.~\ref{fig:bitstar}).
This efficient approach avoids unnecessary computational costs while still almost-surely converging asymptotically to the optimum and finding better solutions faster than existing algorithms, especially in high state dimensions.

\ac{BITstar} approximates continuous search spaces with an \textit{edge-implicit} \citeAcronymDefn{RGG}{penrose_03}.
This \ac{RGG} is defined by a set of random samples and an appropriate connection condition.
The accuracy of this approximation improves as the number of samples increases and almost-surely converges towards containing the optimal solution as the number of samples approaches infinity, similar to \ac{RRTstar}.

This improving approximation is maintained and searched using heuristics.
The initial approximation is searched in order of potential solution quality, as in A*.
When it is improved the search is updated efficiently by reusing previous information, as in incremental search techniques such as \citeAcronymDefn{LPAstar}{koenig_ai04} and \citeAcronymDefn{TLPAstar}{aine_ai16}.
The improving approximation is focused to the \emph{informed set} of states that could provide a better solution, as in Informed \acs{RRTstar} \citep{gammell_iros14,gammell_tro18}.
\squeezeWidowedWords

\ac{BITstar} only requires three user-defined options,
\begin{inparaenum}[(i)]%
    \item a \ac{RGG}-connection parameter,
    \item the heuristic function, and
    \item the number of samples per batch
\end{inparaenum}%
and can generalize existing algorithms (Fig.~\ref{fig:taxonomy}).
With a single batch of samples and no heuristic, it is a search of a static approximation ordered by cost-to-come.
This is exactly Dijkstra's algorithm \citep{dijkstra_59} on a \ac{RGG} or a version of \citeAcronymDefn{FMTstar}{janson_ijrr15}.
With a single batch of samples and a heuristic, it is a search of a static approximation ordered by estimated solution quality.
This is exactly a lazy version of A* \citeExample{cohen_rss14} on a \ac{RGG}.
With multiple batches and a heuristic, it is a truncated incremental search of a changing approximation ordered by estimated solution quality.
This is equivalent to a lazy version of \ac{TLPAstar} on a \ac{RGG} where replanning is always truncated after one propagation.
With multiple batches of one sample and no heuristic, it is a construction of a tree through incremental sampling.
This is equivalent to a `\texttt{steer}'-free version of \ac{RRTstar} where unsuccessful samples are maintained.

\begin{figure}[tb]
    \centering
    \includegraphics[page=1,scale=1]{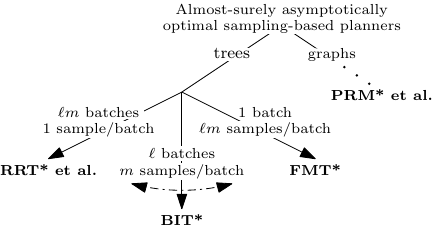}
    \caption
    {%
        \figureCaptionStyle%
        A simplified taxonomy of almost-surely asymptotically optimal sampling-based planners that demonstrates the relationship between \acs{RRTstar}, \acs{FMTstar}, and \acs{BITstar}.
        When using a batch size of a single sample, \acs{BITstar} is a version of \acs{RRTstar}.
        When using a single batch consisting of multiple-samples, \acs{BITstar} is a version of \acs{FMTstar}.
    }
    \label{fig:taxonomy}
\end{figure}%

\ac{BITstar} can also be extended to create new planners.
\ac{SORRTstar} is presented as an extension of heuristically ordered search concepts to the incremental sampling of \acs{RRTstar} algorithms.
A version of both \ac{BITstar} and \ac{SORRTstar} are publicly available in the \citeAcronymDefn{OMPL}{ompl}.

The benefits of performing an ordered anytime search of the problem domain are demonstrated on both abstract problems and experiments for the \acs{CMU} Personal Robotic Lab's \ac{HERB}, a 14-\ac{DOF} mobile manipulation platform \citep{herb}.
The results show that \ac{BITstar} finds better solutions faster than existing almost-surely asymptotically optimal planners and also \ac{RRT}, especially in high dimensions.
The only tested planner that found (worse) solutions faster was \acs{RRT}-Connect \citep{kuffner_icra00}, a bidirectional version of \ac{RRT} that is \emph{not} almost-surely asymptotically optimal.
\squeezeWidowedWords

The remainder of this paper is organized as follows.
Section~\ref{sec:lit} presents a review of previous work combining aspects of informed search and sampling-based planning.
Section~\ref{sec:bitstar} presents \ac{BITstar} as an efficient search of an increasingly dense approximation of a continuous planning problem. %
Section~\ref{sec:anal} proves that \ac{BITstar} is probabilistic complete and almost-surely asymptotically optimal and demonstrates its relationship to \ac{LPAstar} and \ac{TLPAstar}.
Section~\ref{sec:mods} presents simple extensions of \ac{BITstar} that may further improve performance in some planning situations, including \ac{SORRTstar}.
Section~\ref{sec:exp} demonstrates the benefits of \ac{BITstar} on both abstract problems and experiments for \ac{HERB}.
Section~\ref{sec:fin} finally presents a closing discussion and thoughts on future work.
\squeezeWidowedWords

\subsection{Relationship to Previous Publications}\label{sec:intro:prev}
This paper is distinct from our prior work analyzing the necessary conditions for anytime sampling-based planners to improve an existing solution \citeParenthetical{``Informed sampling for asymptotically optimal path planning''}{gammell_tro18}.
This other publication investigates focusing the search for improvements in anytime sampling-based planners.
It presents both analytical concepts that are applicable to any cost function (e.g., informed sets) and algorithmic approaches specific to problems seeking to minimize path length (e.g., direct informed sampling of the $L^2$ informed set).

In comparison, this paper presents algorithmic approaches to order the search for both an initial solution and subsequent improvements.
These techniques are applicable to any cost function, similarly to A*, and make use of the appropriate analytical results from \citet{gammell_tro18}.
The performance of these techniques are illustrated on a number of problems seeking to minimize path length (Section~\ref{sec:exp}) where we make use of the appropriate algorithmic results of \citet{gammell_tro18}.
\squeezeWidowedWords

\subsection{Statement of Contributions}\label{sec:intro:claims}
This paper is a continuation of ideas that were first presented at an \acs{RSS} workshop \citep{gammell_igmpw14} along with a supporting technical report \citep{gammell_arxiv14c} and were then published in \citet{gammell_icra15} and \citet{gammell_phd17}.
It makes the following specific contributions:
\begin{itemize}
    \item Presents an updated version of \ac{BITstar} that avoids repeated consideration of failed edges and is a direct unification of informed graph-based search and sampling-based planning (\algos{algo:bitstar}{algo:prune}).
    \item Develops new extensions to \ac{BITstar}, including \ac{JIT} sampling for the direct search of unbounded problems (\algo{algo:jit}) and \ac{SORRTstar} (\algo{algo:sorrtstar}) as an application of ordered search concepts to the algorithmic simplicity of \ac{RRTstar}.
    \item Provides expanded proof that \ac{BITstar} is probabilistically complete and almost-surely asymptotically optimal (Theorems~\ref{thm:pc} and \ref{thm:asao}) and has a queue ordering equivalent to a lazy \ac{TLPAstar} (Lemma~\ref{lem:lpa}).
    \item Demonstrates experimentally the benefits of using ordered search concepts to combat the curse of dimensionality.
\end{itemize}

\section{Prior Work Ordering Sampling-based Planners}\label{sec:lit}
A formal definition of the optimal path planning problem is presented to motivate a review of existing literature on the combination of informed graph-based and sampling-based concepts (Section~\ref{sec:lit:opt}).
This prior work can be loosely classified as either incorporating sampling into informed A*-style searches (Section~\ref{sec:lit:astar}) or adding heuristics to incremental \acs{RRT}/\acs{RRTstar}-style searches (Section~\ref{sec:lit:rrt}).

\subsection{The Optimal Path Planning Problem}\label{sec:lit:opt}
The two most common path planning problems in robotics are those of \emph{feasible} and \emph{optimal} planning.
Feasible path planning seeks a path that connects a start to a goal in a search space while avoiding obstacles and obeying the differential constraints of the robot.
Optimal path planning seeks the feasible path that minimizes a chosen cost function (e.g., path length).
By definition, solving an optimal planning problem requires solving the underlying feasible problem.

The optimal path planning problem is defined in Definition~\ref{defn:back:opt} similarly to \citet{karaman_ijrr11}.
This definition is expressed in the state space of a robot but can be posed in other representations, including configuration space \citep{lozano-perez_tc83}.
The goal of these problems may be a single goal state (e.g., a pose for a mobile robot) or any state in a goal \textit{region} (e.g., the set of joint angles that give a redundant manipulator a desired end-effector position).

\begin{defn}[The optimal path planning problem]\label{defn:back:opt}
    Let $\stateSet \subseteq \real^\dimension$ be the state space of the planning problem, $\obsSet \subset \stateSet$ be the states in collision with obstacles, and $\freeSet = \closure{\stateSet \setminus \obsSet}$ be the resulting set of permissible states, where $\closure{\cdot}$ represents the closure of a set.
    Let $\xstart \in \freeSet$ be the initial state and $\goalSet \subset \freeSet$ be the set of desired goal states.
    Let $\pathSeq : \; \left[0,1\right] \to \freeSet$ be a continuous map to a sequence of states through collision-free space of bounded variation that can be executed by the robot (i.e., a collision-free, feasible path) and $\pathSet$ be the set of all such nontrivial paths.
    \squeezeWidowedWords

    The optimal path planning problem is then formally defined as the search for a path, $\bestPath\in\pathSet$, that minimizes a given cost function, $\pathCostSymb : \; \pathSet \to \positiveReal$, while connecting $\xstart$ to $\xgoal\in\goalSet$,
    \begin{align*}
        \bestPath = \argmin_{\pathSeq \in \pathSet} \setst{\pathCost{\pathSeq}}{\pathSeq\left(0\right) = \xstart,\, \pathSeq\left(1\right) \in \goalSet},
    \end{align*}%
    where $\positiveReal$ is the set of non-negative real numbers.
\end{defn}

\subsection{A*-based Approaches}\label{sec:lit:astar}
A* is a popular planning algorithm because it is the optimally efficient search of a graph.
Any other algorithm guaranteed to find the resolution-optimal solution will expand at least as many vertices when using the same heuristic \citep{hart_tssc68}.
This efficiency is achieved by ordering the search by potential solution quality such that the optimal solution is found by only considering states that could have provided a better solution. %
Applying A* to continuous path planning problems requires discretizing the search domain and significant work has incorporated sampling into this process, often to avoid the need to do so \textit{a priori}.

\citet{sallaberger_acta95} demonstrate the advantages of including stochasticity in the representation of a continuous state space by adding random perturbations to a regular discretization.
They solve path planning problems for spacecraft and multilink robotic arms using dynamic programming and show that their technique finds better solutions than regular discretizations alone; however, the approximation is still defined \textit{a priori} to the search.

\citeAcronymDefn{RAstar}{diankov_iros07} uses random sampling to apply A* directly to continuous planning problems.
Vertices are expanded a user-specified number of times by randomly sampling possible nearby descendants.
If a sample is sufficiently different than the existing states in the tree and can be reached without collision it is added as a child of the expanded vertex.
The resulting sampling-based search expands outwards from the start in order of potential solution quality but is not anytime.
If a (suitable) solution is not found then the search must be restarted with a different number of samples per vertex.

\citeAcronymDefn{HRAstar}{teniente_iros13} performs a similar search for systems with differential constraints by sampling control inputs instead of states.
Vertices are expanded a user-specified number of times by generating random control inputs and propagating them forward in time with a motion model.
These new states are used both to expand a tree and rewire it when they are sufficiently similar to existing states.
The vertex to expand is selected with a hybrid cost policy that considers path cost, the number of nearby vertices, and the distance to obstacles.
This biases sampling into unexplored regions of the problem domain but may waste computational effort on states that are unnecessary to find the eventual solution.

\citeAcronymDefn{SBAstar}{persson_ijrr14} performs an ordered search by \emph{iteratively} expanding vertices in a heuristically weighted version of \citeAcronymDefn{EST}{hsu_ijcga99}.
At each iteration, a \emph{single} random sample is generated near the vertex at the front of a priority queue.
This queue is ordered on the potential solution quality of vertices and the likelihood of sampling a \emph{unique} and \emph{useful} sample in their neighbourhoods.
These likelihoods are required to bias sampling into unexplored space since vertices are never removed from the queue.
They are estimated from previous collision checks and estimates of the sample density around vertices.
Again, this search order on metrics other than the optimization objective may waste computational effort on states that are unnecessary to find the eventual solution.

\escapeParagraph
Unlike these methods to incorporate sampling into informed graph-based search, \ac{BITstar} maintains anytime performance while ordering its search only on potential solution quality.
This allows it to be run indefinitely until a suitable solution is found, return incrementally better solutions, and almost-surely asymptotically converge to the optimum.
This also limits its search of each representation to the set of states that were believed to be capable of providing a better solution than the one eventually found.

\subsection{\acs{RRT}-based Approaches}\label{sec:lit:rrt}
\ac{RRT} and \ac{RRTstar} solve continuous path planning problems by using incremental sampling to build a tree through obstacle-free space.
This avoids \textit{a priori} discretizations of the problem domain and allows them to be run indefinitely until a (suitable) solution is found.
This also makes the search dependent on the sequence of samples (i.e., makes it random) and significant work has sought ways to use heuristics to order the search.
Heuristics can also be used to focus the search, as in Anytime \acsp{RRT} \citep{ferguson_iros06} and Informed \acs{RRTstar} \citep{gammell_iros14,gammell_tro18} but this does not change the order of the search.
\squeezeWidowedWords

\citeAcronymDefn{hRRT}{urmson_iros03} probabilistically includes heuristics in the \ac{RRT} expansion step.
Randomly sampled states are probabilistically added in proportion to their heuristic value relative to the tree.
This balances the Voronoi bias of the \ac{RRT} search with a preference towards expanding potentially high-quality paths.
This improves performance, especially in problems with continuous cost functions \citeParenthetical{e.g., path length}{urmson_iros03}, but maintains a nonzero probability of wasting computational effort on states that are unnecessary for the eventual solution.

\citet{karaman_icra11} and \citet{akgun_iros11} both use heuristics to accelerate the convergence of \ac{RRTstar}.
\citet{karaman_icra11} remove vertices whose \emph{current} cost-to-come plus a heuristic estimate of cost-to-go is higher than the current solution.
\citet{akgun_iros11} reject samples that are heuristically estimated to be unable to provide a better solution.
These approaches both avoid unnecessary computational effort but do not alter the initial search as heuristics are not applied until a solution is found.

\citet{kiesel_socs12} use a two-stage process to order sampling for \ac{RRTstar} in their \emph{f-biasing} technique.
A heuristic is first calculated by solving a coarse discretization of the planning problem with Dijkstra's algorithm.
This heuristic is then used to bias \ac{RRTstar} sampling to the regions of the problem domain where solutions were found.
This increases the likelihood of finding solutions quickly but maintains a nonzero sampling probability over the entire problem domain for the whole search and allows search effort to be wasted on states unnecessary for the eventual solution.

\RRTsharp{} \citep{arslan_icra13,arslan_icra15} uses heuristics to find and update a tree in the graph built incrementally by \citeAcronymDefn{RRG}{karaman_ijrr11}.
It efficiently maintains the optimal connection to each vertex by using \ac{LPAstar} to propagate changes through the \emph{entire} graph.
This can also be implemented as policy iteration \citep{arslan_cdc16}.

\RRTsharp{} uses heuristics to limit this graph to regions of the problem domain that can improve an \emph{existing} solution.
This focuses the search for an improvement but does not alter the \emph{order} in which the graph itself is constructed.
As in \ac{RRTstar}, this graph is built in the order given by the sampling sequence using \ac{RRT}-style incremental techniques.
This may cause important, difficult-to-sample states to be discarded simply because they cannot currently be connected to the tree.
It may also waste computational effort, especially during the search for an initial solution (Fig.~\ref{fig:order}).

\RRTx{} \citep{otte_wafr14,otte_ijrr16} extends propagated rewiring to dynamic environments by limiting propagation to changes that improve the graph by more than a user-specified threshold.
This \emph{$\epsilon$-consistency} balances the local rewiring performed by \ac{RRTstar} and the cascaded rewiring performed by \RRTsharp{} to improve performance.
As in \ac{RRTstar} and \RRTsharp{}, it does not order the construction of the graph itself.

\ac{FMTstar} \citep{janson_isrr13,janson_ijrr15} uses a marching method to search a batch of samples in order of increasing cost-to-come, similar to Dijkstra's algorithm.
This separation of approximation and search makes the search order independent of the sampling order but sacrifices anytime performance.
Solutions are not returned until the search is finished and the search must be restarted if a (suitable) solution is not found, as with any other \textit{a priori} discretization.

The \acf{MPLB} algorithm \citep{salzman_icra15} extends \ac{FMTstar} with a heuristic and quasi-anytime resolution.
Quasi-anytime performance is achieved by \emph{independently} searching increasingly large batches of samples and returning the improved solutions when each individual search finishes.
It is stated that this can be done efficiently by reusing information but no specific methods are presented.

\escapeParagraph
Unlike these methods to incorporate informed search concepts into single-query sampling-based algorithms, \ac{BITstar} orders its entire search only on potential solution quality and still maintains anytime performance.
This avoids wasting computational effort by only considering states that are believed to be capable of providing the best solution given the current representation.
This also returns suboptimal solutions in an anytime manner while almost-surely converging asymptotically to the optimum.

\section[\acf{BITstar}]{\acf{BITstar}\footnotemark{}}\label{sec:bitstar}
\footnotetext{Pronounced \textit{bit star}.}%
Any discrete set of states distributed in the state space of a planning problem, $\samplesSet \subset \stateSet$, can be viewed as a graph whose edges are given algorithmically by a transition function (an \emph{edge-implicit} graph).
When these states are sampled randomly, $\samplesSet = \set{\statex \sim \uniform{\stateSet}}$, this graph is known as a \ac{RGG} and has studied probabilistic properties \citep{penrose_03}.

The connections (edges) between states (vertices) in a \ac{RGG} depend on their relative geometric position.
Common \acp{RGG} have edges to a specific number of each vertex's nearest neighbours \citeParenthetical{a $k$-nearest graph}{eppstein_dcg97} or to all neighbours within a specific distance \citeParenthetical{an $r$-disc graph}{gilbert_siam61}.
\ac{RGG} theory provides probabilistic relationships between the number and distribution of vertices, the $k$ or $r$ defining the graph, and specific properties such as connectivity or relative cost through the graph \citep{penrose_03,muthukrishnan_siam05}.

\begin{figure*}[tbp]
    \centering
    \includegraphics[page=1,width=\textwidth]{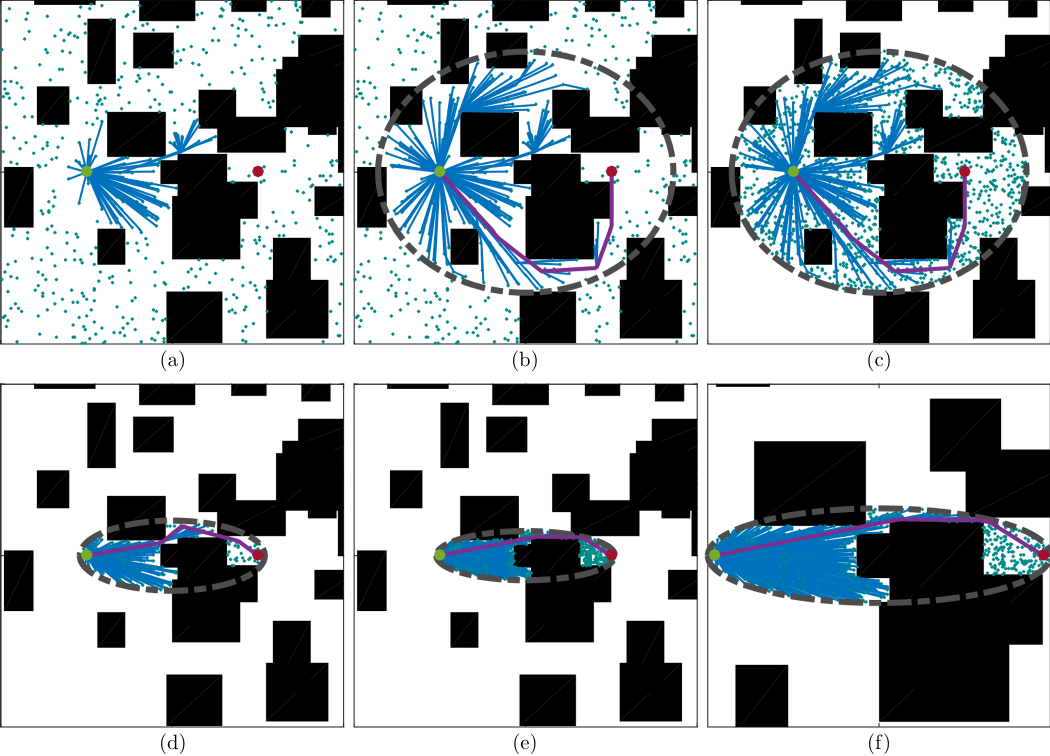}
    \caption[An example of how \acs{BITstar} uses incremental techniques to efficiently search batches of samples in order of potential solution quality.]
    {%
        \figureCaptionStyle%
        An example of how \acs{BITstar} uses incremental techniques to efficiently search batches of samples in order of potential solution quality.
        The search starts with a batch of samples uniformly distributed in the planning domain.
        This batch is searched outward from the start in order of potential solution quality, (a).
        The search continues until the batch is exhausted or a solution is found that cannot be improved with the current samples, (b).
        New samples are then added to the informed set and incremental techniques are used to continue the search, (c)--(e).
        This results in an algorithm that performs an ordered anytime search that almost-surely asymptotically converges to the optimal solution, shown enlarged in (f).
        Note that \acs{BITstar} orders all aspects of the search and never considers states in a batch that cannot provide the best solution (i.e., searches only inside the informed set defined by the eventual solution in the current graph).
    }
    \label{fig:example}
\end{figure*}

Anytime sampling-based planners can be viewed as algorithms that construct a graph in the problem domain and search it.
Some algorithms perform this construction and search simultaneously (e.g., \ac{RRTstar}) and others separately (e.g., \ac{FMTstar}) but, as in graph-based search, their performance always depends on both the accuracy of their approximation and the quality of their search.
\ac{RRTstar} uses \ac{RGG} theory to limit graph complexity while maintaining probabilistic bounds on approximation accuracy but incompletely searches the graph in the order it is constructed (i.e., performs a \emph{random} search).
\RRTsharp{} and \RRTx{} exploit the constructed graphs more thoroughly than \ac{RRTstar} but do not alter the order of its construction (i.e., they depend on the same \emph{random} search of the underlying problem domain).
\ac{FMTstar} performs a complete ordered search but uses \ac{RGG} theory to define an \textit{a priori} approximation of the problem domain (i.e., it is not anytime).
\squeezeWidowedWords

\ac{BITstar} uses \ac{RGG} theory to limit graph complexity while simultaneously building the graph in an anytime manner and searching it in order of potential solution quality.
This is made possible by using \emph{batches} of random samples to build an increasingly dense \textit{edge-implicit} \ac{RGG} in the informed set and using incremental search techniques to update the search (Fig.~\ref{fig:example}).
The anytime approximation allows \ac{BITstar} to run indefinitely until a (suitable) solution is found.
The ordered search avoids unnecessary computational cost by only considering states when they represent the best possible solution to a given approximation.
The incremental search allows it to update its changing representation efficiently by reusing previous information.

The complete \ac{BITstar} algorithm is presented in \algos{algo:bitstar}{algo:prune} and Sections~\ref{sec:bitstar:note}--\ref{sec:bitstar:algo:prune} with a discussion on some practical considerations presented in Section~\ref{sec:bitstar:algo:pract}.
For simplicity, the discussion is limited to a search from a single start state to a finite set of goal states with a constant batch size.
This formulation can be directly extended to searches from a start or goal region, and/or with variable batch sizes.
This version is publicly available in \ac{OMPL}.

\ac{BITstar} builds an explicit spanning tree of the implicit \ac{RGG} defined by a batch of samples.
The graph initially consists of only the start and goal (\algolines{algo:bitstar}{startInit}{endInit}; Section~\ref{sec:bitstar:algo:init}) but is incrementally grown with batches of new samples during the search (\algolines{algo:bitstar}{startBatch}{endBatch}; Section~\ref{sec:bitstar:algo:batch}).
The graph is searched in order of potential solution quality by selecting the best possible edge from a queue ordered by potential solution cost (\algolines{algo:bitstar}{startNextEdge}{endNextEdge}; Section~\ref{sec:bitstar:algo:getEdge}) and considering whether this edge improves the cost-to-come of its target vertex and could improve the current solution (\algolines{algo:bitstar}{startProcess}{endProcess}; Section~\ref{sec:bitstar:algo:addEdge}).
The search continues until no edges in the implicit \ac{RGG} could provide a better solution, at which point the accuracy of the approximation is increased by adding a batch of new samples and the search is resumed.
This process continues indefinitely until a suitable solution is found.

\subsection{Notation}\label{sec:bitstar:note}
The functions $\gBelow{\statex}$ and $\hBelow{\statex}$ represent admissible estimates of the cost-to-come to a state, $\statex \in \stateSet$, from the start and the cost-to-go from a state to the goal, respectively (i.e., they bound the true costs from below).
The function, $\fBelow{\statex}$, represents an admissible estimate of the cost of a path from $\xstart$ to $\goalSet$ constrained to pass through $\statex$, i.e., $\fBelow{\statex} := \gBelow{\statex} + \hBelow{\statex}$.
This estimate defines an informed set of states, $\fBelowSet := \setst{\statex\in\stateSet}{\fBelow{\statex} < \ccur}$, that could provide a solution better than the current best solution cost, $\ccur$ \citep{gammell_iros14,gammell_tro18}.
\squeezeWidowedWords

Let $\treeGraph := \pair{\vertexSet}{\edgeSet}$ be an \emph{explicit} tree with a set of vertices, $\vertexSet\subset\freeSet$, and edges, $\edgeSet = \set{\pair{\statev}{\statew}}$ for some $\statev,\, \statew \in \vertexSet$.
The function $\gAbove{\statex}$ then represents the cost-to-come to a state $\statex \in \stateSet$ from the start vertex given the current tree, $\treeGraph$.
A state not in the tree, or otherwise unreachable from the start, is assumed to have a cost-to-come of infinity.
It is important to recognize that these two functions will always bound the unknown true optimal cost to a state, $\gTrue{\cdot}$, i.e., $\forall \statex \in \stateSet,\,\gBelow{\statex} \leq \gTrue{\statex} \leq \gAbove{\statex}$.

The functions $\cBelow{\statex}{\statey}$ and $\cTrue{\statex}{\statey}$ represent an admissible estimate of the cost of an edge and the true cost of an edge between states $\statex,\, \statey \in \stateSet$, respectively.
Edges that intersect an obstacle are assumed to have a cost of infinity, and therefore $\forall \statex,\, \statey \in \stateSet,\,\, \cBelow{\statex}{\statey} \leq  \cTrue{\statex}{\statey} \leq \infty$.
\squeezeWidowedWords

The notation $\stateSet \setInsert \set{\statex} $ and $\stateSet \setRemove \set{\statex}$ is used to compactly represent the set compounding operations $\stateSet \gets \stateSet \cup \set{\statex}$ and $\stateSet \gets \stateSet \setminus \set{\statex}$, respectively.
As is customary, the minimum of an empty set is taken to be infinity.
\squeezeWidowedWords

\subsection[Initialization]{Initialization (\algolines{algo:bitstar}{startInit}{endInit})}\label{sec:bitstar:algo:init}
\ac{BITstar} begins searching a planning problem with the start, $\xstart$, in the spanning tree, $\treeGraph\coloneqq\pair{\vertexSet}{\edgeSet}$, and the goal states, $\goalSet$, in the set of unconnected states, $\unconnectedSet$ (\algolines{algo:bitstar}{initTree}{initSamples}).
This defines an implicit \ac{RGG} whose vertices consist of all states (i.e., $\vertexSet\cup\unconnectedSet$) and whose edges are defined by a distance function and an appropriate connection limit.
When the goal is a continuous region of the problem domain it will need to be discretized (e.g., sampled) before adding to the set of unconnected states.

The explicit spanning tree of this edge-implicit \ac{RGG} is built using two queues, a vertex expansion queue, $\vertexQueue$, and an edge evaluation queue, $\edgeQueue$.
These queues are sorted in order of potential solution quality through the current tree.
Vertices in the vertex queue, $\statev\in\vertexQueue$, are ordered by the sum of their current cost-to-come and an estimate of their cost-to-go, $\gAbove{\statev} + \hBelow{\statev}$.
Edges in the edge queue, $\pair{\statev}{\statex}\in\edgeQueue$, are sorted by the sum of the current-cost-to-come of their source vertex, an estimate of the edge cost, and an estimate of the cost-to-go of their target vertex, $\gAbove{\statev} + \cBelow{\statev}{\statex} + \hBelow{\statex}$.
Ties are broken in the vertex queue in favour of entries with the lowest cost-to-come through the current tree, $\gAbove{\statev}$, and in the edge queue in favour of the lowest cost-to-come through the current tree and estimated edge cost, $\gAbove{\statev} + \cBelow{\statev}{\statex}$, and then the cost-to-come through the current tree, $\gAbove{\statev}$.
These queues are initialized to contain all the vertices in the tree and an empty queue, respectively (\algoline{algo:bitstar}{initQueues}).
\squeezeWidowedWords%

To improve search efficiency, \ac{BITstar} tracks the vertices in the goal region, $\solnVertices$, the vertices that have never been expanded, $\unexpandedVertices$, the samples newly created during this batch, $\newSet$, and the current best solution, $\ccur$.
These are initialized to any vertices already in the solution (empty in all but the most trivial planning problems), the existing vertices, the existing samples, and the current best solution, respectively (\algolines{algo:bitstar}{initLabels}{initCmin}).

Initialized, \ac{BITstar} now searches the continuous planning problem by alternately building increasingly accurate implicit \ac{RGG} approximations (Section~\ref{sec:bitstar:algo:batch}) and searching these representations for explicit solutions in order of potential solution quality (Sections~\ref{sec:bitstar:algo:getEdge} and \ref{sec:bitstar:algo:addEdge}).

\begin{algorithm}[tp]
    \algorithmStyle
    \caption[\acs{BITstar}]%
    {\acs{BITstar}$\left(\xstart \in \freeSet,\; \goalSet\subset\freeSet\right)$}\label{algo:bitstar}
    \tikzmark{initTop}$\vertexSet \gets \set{\xstart};\;$
    $\edgeSet \gets \emptyset;\;$
    $\treeGraph=\pair{\vertexSet}{\edgeSet}$\label{algo:bitstar:initTree}\label{algo:bitstar:startInit}\;
    $\unconnectedSet \gets \goalSet$\label{algo:bitstar:initSamples}\;
    $\vertexQueue \gets \vertexSet;\;$
    $\edgeQueue \gets \emptyset$\label{algo:bitstar:initQueues}\;
    $\solnVertices \gets \vertexSet\cap\goalSet;\;$
    $\unexpandedVertices \gets \vertexSet;\;$
    $\newSet \gets \unconnectedSet$\label{algo:bitstar:initLabels}\; 
    $\ccur \gets \min_{\vgoal\in\solnVertices} \set {\gAbove{\vgoal}}$\label{algo:bitstar:initCmin} \label{algo:bitstar:endInit}\tikzmark{initBottom}\;
    \Repeat{$\mathtt{STOP}$}
    {
        \If{\tikzmark{batchTop}$\edgeQueue \equiv \emptyset$ {\rm \bf and} $\vertexQueue \equiv \emptyset$\label{algo:bitstar:emptyQueues}\label{algo:bitstar:startBatch}}
        {
            $\reuseSet \gets \mathtt{Prune}\left(\treeGraph,\unconnectedSet,\ccur\right)$\label{algo:bitstar:prune}\;
            $\samplingSet \gets \mathtt{Sample}\left(\samplesPerBatch, \xstart, \goalSet, \ccur\right)$\label{algo:bitstar:sample}\;
            $\newSet \gets \reuseSet \cup \samplingSet$\label{algo:bitstar:resetNewSamples}\;
            $\unconnectedSet \setInsert \newSet$\label{algo:bitstar:addNewSamples}\;
            $\vertexQueue \gets \vertexSet$\label{algo:bitstar:resetVertexQueue}\label{algo:bitstar:endBatch}\tikzmark{batchBottom}\;
        }
        
        \While{\tikzmark{queueTop}$\mathtt{BestQueueValue}\left(\vertexQueue\right)\leq\mathtt{BestQueueValue}\left(\edgeQueue\right)$\label{algo:bitstar:compareQueues}\label{algo:bitstar:startExpandVertex}\label{algo:bitstar:startNextEdge}}
        {
            $\mathtt{ExpandNextVertex}\left(\vertexQueue, \edgeQueue, \ccur\right)$\label{algo:bitstar:updateEdgeQueue}\label{algo:bitstar:endExpandVertex}\;
        }
        
        $\bestEdge \gets \mathtt{PopBestInQueue}\left(\edgeQueue\right)$\label{algo:bitstar:popEdgeQueue}\label{algo:bitstar:endNextEdge}\tikzmark{queueBottom}\;
        \If{\tikzmark{addTop}$\gAbove{\vmin}+\cBelow{\vmin}{\xmin}+\hBelow{\xmin}<\ccur$\label{algo:bitstar:canBeBetterSoln}\label{algo:bitstar:startProcess}}
        {
            \If{$\gAbove{\vmin}+\cBelow{\vmin}{\xmin}<\gAbove{\xmin}$\label{algo:bitstar:canBeBetterGraph}}
            {
                $\cedge \gets \cTrue{\vmin}{\xmin}$\label{algo:bitstar:calculateEdgeCost}\;
                \If{$\gAbove{\vmin}+\cedge+\hBelow{\xmin}<\ccur$\label{algo:bitstar:canRealEdgeBeBetterSoln}}
                {
                    \If{$\gAbove{\vmin}+\cedge<\gAbove{\xmin}$\label{algo:bitstar:improvesGraph}}
                    {
                        \If{$\xmin \in \vertexSet$\label{algo:bitstar:isInTree}}
                        {
                            $\vparent \gets \mathtt{Parent}\left(\xmin \right)$\label{algo:bitstar:findParent}\;
                            $\edgeSet \setRemove \set{\pair{\vparent}{\xmin}}$\label{algo:bitstar:rmParent}\;
                        }
                        \Else
                        {
                            $\unconnectedSet \setRemove \set{\xmin}$\label{algo:bitstar:rmSample}\;
                            $\vertexSet \setInsert \set{\xmin}$\;
                            $\vertexQueue \setInsert \set{\xmin}$\;
                            $\unexpandedVertices \setInsert \set{\xmin}$\label{algo:bitstar:addVertex}\;
                            \If{ $\xmin \in \goalSet$ \label{algo:bitstar:isSoln}\label{algo:bitstar:goalStart}}
                            {
                                $\solnVertices \setInsert \left\lbrace \xmin \right\rbrace$\label{algo:bitstar:addSoln}\label{algo:bitstar:goalEnd}\;
                            }
                        }
                        $\edgeSet \setInsert \set{\bestEdge}$\label{algo:bitstar:addEdge}\;
                        $\ccur \gets \min_{\vgoal\in\solnVertices} \set {\gAbove{\vgoal}}\tikzmark{right}$\label{algo:bitstar:cmin}\;
                    }
                }
            }
        }
        \Else
        {
            $\edgeQueue \gets \emptyset;\;$
            $\vertexQueue \gets \emptyset$\label{algo:bitstar:clearQueue}\label{algo:bitstar:endProcess}\tikzmark{addBottom}\;
        }
    }
    \Return{$\treeGraph$};
    \AddNote{initTop}{initBottom}{right}{\S\ref{sec:bitstar:algo:init}}%
    \AddNote{batchTop}{batchBottom}{right}{\S\ref{sec:bitstar:algo:batch}}%
    \AddNote{queueTop}{queueBottom}{right}{\S\ref{sec:bitstar:algo:getEdge}}%
    \AddNote{addTop}{addBottom}{right}{\S\ref{sec:bitstar:algo:addEdge}}%
\end{algorithm}%

\subsection[Batch Addition]{Batch Addition (\algolines{algo:bitstar}{startBatch}{endBatch})}\label{sec:bitstar:algo:batch}
\ac{BITstar} alternates between building an increasingly dense approximation of the continuous planning problem and searching this representation for a solution.
The approximation is updated whenever it has been completely searched (i.e., both queues are empty; \algoline{algo:bitstar}{emptyQueues}) by removing unnecessary states and adding a batch of new samples.
This avoids the computational cost of representing regions of the problem domain that cannot provide a better solution while increasing the accuracy of approximating the regions that can (i.e., the informed set).
This improving approximation allows \ac{BITstar} to almost-surely converge asymptotically to the optimal solution.

The approximation is pruned to the informed set by removing any states or edges that cannot improve the \emph{current} solution (\algoline{algo:bitstar}{prune}; Section~\ref{sec:bitstar:algo:prune}).
This reduces unnecessary complexity but may disconnect vertices in the informed set that cannot improve the solution solely because of their current connections.
These vertices are recycled as additional `new' samples in the batch so that they may be reconnected later if better connections are found.

The approximation is improved by adding $\samplesPerBatch$ new randomly generated samples from the informed set (\algoline{algo:bitstar}{sample}).
This can be accomplished with direct informed sampling \citep{gammell_iros14,gammell_tro18} or advanced rejection sampling \citeExample{kunz_icra16} as appropriate.

Each batch of states are labelled as $\newSet$ for the duration of that batch's search (\algoline{algo:bitstar}{resetNewSamples}).
This set is used to improve search efficiency and consists of both the newly generated samples and the recycled disconnected vertices.
\ac{BITstar} adds these new states to the set of unconnected states and initializes the vertex queue with all the vertices in the tree (\algolines{algo:bitstar}{addNewSamples}{resetVertexQueue}) to restart the search (Sections~\ref{sec:bitstar:algo:getEdge} and \ref{sec:bitstar:algo:addEdge}).

\subsection[Edge Selection]{Edge Selection (\algolines{algo:bitstar}{startNextEdge}{endNextEdge})}\label{sec:bitstar:algo:getEdge}
Graph-based search techniques often assume that finding and evaluating vertex connections is computationally inexpensive (e.g., given explicitly).
This is not true in sampling-based planning as finding vertex connections (e.g., the edges in the edge-implicit \acs{RGG}) requires performing a nearest-neighbour search and evaluating them requires checking for collisions and solving \acp{2BVP}, such as differential constraints.
\ac{BITstar} avoids these computational costs until required by using a lazy search procedure that delays both finding and evaluating connections in the \ac{RGG}.
Similar lazy techniques can be found in both advanced graph-based search and sampling-based planners \citep{bohlin_icra00,sanchez_ijrr02,helmert_jair06,branicky_icra01,cohen_rss14,hauser_icra15,salzman_tro16}.

Connections are found by using a vertex queue, $\vertexQueue$, ordered by potential solution quality.
This vertex queue delays processing a vertex (i.e., performing a nearest-neighbour search) until its outgoing connections \emph{could} be part of the best solution to the current graph.
Connections from all vertices are evaluated by using an edge queue, $\edgeQueue$, also ordered by potential solution quality.
This edge queue delays evaluating an edge (i.e., performing collision checks and solving \acp{2BVP}) until it \emph{could} be part of the best solution to the current graph.

A vertex in the vertex queue could be part of the best solution when it could provide an outgoing edge better than the best edge in the edge queue.
When the heuristic is consistent (e.g., the $L^2$ norm) the queue value of a vertex, $\statev\in\vertexQueue$, is a lower-bounding estimate of the queue value of its outgoing edges,
\begin{align*}
    \forall \statex\in\stateSet,\; \gAbove{\statev} + \hBelow{\statev} \leq \gAbove{\statev} + \cBelow{\statev}{\statex} + \hBelow{\statex}.
\end{align*}

The best edge at any iteration can therefore be found by processing the vertex queue until it is worse than the edge queue (\algoline{algo:bitstar}{compareQueues}).
This process of removing a vertex from the vertex queue and placing its outgoing edges in the edge queue is referred to as \emph{expanding} a vertex (\algoline{algo:bitstar}{updateEdgeQueue}; Section~\ref{sec:bitstar:algo:expandVertex}).
Once all necessary vertices are expanded, the best edge in the queue, $\bestEdge$, is removed (\algoline{algo:bitstar}{popEdgeQueue}) and used for this iteration of the search (Section~\ref{sec:bitstar:algo:addEdge}).\

The functions $\mathtt{BestQueueValue}\left(\cdot\right)$ and $\mathtt{PopBestInQueue}\left(\cdot\right)$ return the value of the element at the front of a queue and pop the element off the front of a queue, respectively.

\subsection[Edge Processing]{Edge Processing (\algolines{algo:bitstar}{startProcess}{endProcess})}\label{sec:bitstar:algo:addEdge}
\ac{BITstar} also uses heuristics to avoid expensive calculations when evaluating the best edge, $\bestEdge$.
An edge is added to the spanning tree if and only if
\begin{enumerate}
    \item an \emph{estimate} of its cost \emph{could} provide a better \emph{solution}, given the current tree (\algoline{algo:bitstar}{canBeBetterSoln}),\label{item:edge:estBetterSoln}
    \begin{align*}
        \gAbove{\vmin} + \cBelow{\vmin}{\xmin} + \hBelow{\xmin} < \ccur,%
    \end{align*}
    \item an \emph{estimate} of its cost \emph{could} improve the \emph{current tree} (\algoline{algo:bitstar}{canBeBetterGraph}),\label{item:edge:estBetterGraph}
    \begin{align*}
        \gAbove{\vmin} + \cBelow{\vmin}{\xmin} < \gAbove{\xmin},%
    \end{align*}
    \item its \emph{real} cost \emph{could} provide a better \emph{solution}, given the current tree (\algoline{algo:bitstar}{canRealEdgeBeBetterSoln}),\label{item:edge:realBetterSoln}
    \begin{align*}
        \gAbove{\vmin} + \cTrue{\vmin}{\xmin} + \hBelow{\xmin} < \ccur,%
    \end{align*}
    \item and its \emph{real} cost \emph{will} improve the \emph{current tree} (\algoline{algo:bitstar}{improvesGraph}),\label{item:edge:realBetterGraph}
    \begin{align*}
        \gAbove{\vmin} + \cTrue{\vmin}{\xmin} < \gAbove{\xmin}.%
    \end{align*}
\end{enumerate}
For collision-free edges, Conditions \ref{item:edge:estBetterSoln} and \ref{item:edge:realBetterSoln} are always true in the absence of a solution, while Conditions \ref{item:edge:estBetterGraph} and \ref{item:edge:realBetterGraph} are always true when the target of the edge, $\xmin$, is not in the spanning tree.

Checking if the edge could ever provide a better solution or improve the current tree (Conditions \ref{item:edge:estBetterSoln} and \ref{item:edge:estBetterGraph}) allows \ac{BITstar} to reject edges without calculating their true cost (\algolines{algo:bitstar}{canBeBetterSoln}{canBeBetterGraph}).
Condition \ref{item:edge:estBetterSoln} also provides a stopping condition for searching the current \ac{RGG}.
When an edge fails this condition so does the entire queue and both queues can be cleared to start a new batch (\algoline{algo:bitstar}{clearQueue}).
If the edge fails Condition \ref{item:edge:estBetterGraph} it is discarded and the iteration finishes.
If the edge passes both these conditions its true cost is calculated by performing a collision check and solving any \acp{2BVP} (\algoline{algo:bitstar}{calculateEdgeCost}).
Note that edges in collision are considered to have infinite cost.

Checking if the real edge could provide a better solution given the current tree (Condition \ref{item:edge:realBetterSoln}), allows \ac{BITstar} to reduce tree complexity by rejecting edges that could never improve the current solution (\algoline{algo:bitstar}{canRealEdgeBeBetterSoln}).
Checking if the real edge improves the current tree (Condition \ref{item:edge:realBetterGraph}), assures the cost-to-come of the explicit tree decreases monotonically (\algoline{algo:bitstar}{improvesGraph}).
If the edge fails either of these conditions it is discarded and the iteration finishes.

An edge passing all of these conditions is added to the spanning tree.
If the target vertex is already connected (\algoline{algo:bitstar}{isInTree}), then the edge represents a \emph{rewiring} and the current edge must be removed (\algolines{algo:bitstar}{findParent}{rmParent}).
Otherwise, the edge represents an \emph{expansion} of the tree and the target vertex must be moved from the set of unconnected states to the set of vertices, inserted into the vertex queue for future expansion, and marked as a never-expanded vertex (\algolines{algo:bitstar}{rmSample}{addVertex}).
The new vertex is also added to the set of vertices in the goal region if appropriate (\algolines{algo:bitstar}{isSoln}{addSoln}).
\squeezeWidowedWords

The new edge is then finally added to the tree (\algoline{algo:bitstar}{addEdge}) and the current best solution is updated as necessary (\algoline{algo:bitstar}{cmin}).
The search then continues by selecting the next edge in the queue (Section~\ref{sec:bitstar:algo:getEdge}) or increasing the approximation accuracy if the current \ac{RGG} cannot provide a better solution (Section~\ref{sec:bitstar:algo:batch}).

\begin{algorithm}[tb]
    \algorithmStyle
    \caption[ExpandNextVertex]%
    {$\mathtt{ExpandNextVertex}\left(\vertexQueue\subseteq\vertexSet,
                                    \vphantom{\edgeQueue\subseteq\vertexSet\times\left(\vertexSet\cup\stateSet\right),\; \ccur\in\positiveReal}\right.$\\
                                    $\left.\hspace{85pt}\vphantom{\vertexQueue\subseteq\vertexSet,}
                                \edgeQueue\subseteq\vertexSet\times\left(\vertexSet\cup\stateSet\right),\; \ccur\in\positiveReal\right)$
    }\label{algo:expand}
    $\vmin \gets \mathtt{PopBestInQueue}\left(\vertexQueue\right)$\label{algo:expand:popVertexQueue}\;
    \If{$\vmin\in\unexpandedVertices$\label{algo:expand:newForSamples}\label{algo:expand:startNearSamples}}
    {
        $\nearSet \gets \mathtt{Near}\left(\unconnectedSet,\vmin,\rbitstar\right)$\label{algo:expand:nearAllSamples}\;
    }
    \Else
    {
        $\nearSet \gets \mathtt{Near}\left(\newSet\cap\unconnectedSet,\vmin,\rbitstar\right)$\label{algo:expand:nearNewSamples}\label{algo:expand:endNearSamples}\;
    }
    $\edgeQueue \setInsert \left\lbrace \pair{\vmin}{\statex} \in \vertexSet\times \nearSet \;\; \middle| \;\; \gBelow{\vmin}
        \vphantom{+ \cBelow{\vmin}{\statex} + \hBelow{\statex} < \ccur}\right.$\label{algo:expand:insertNearSamples}\\
        \skipln$\left.\hspace{100pt}\vphantom{\pair{\vmin}{\statex} \in \vertexSet\times \nearSet \;\; \gBelow{\vmin}}
    {}+ \cBelow{\vmin}{\statex} + \hBelow{\statex} < \ccur\right\rbrace$\;
    
    \If{$\vmin\in\unexpandedVertices$\label{algo:expand:newForVertices}\label{algo:expand:startNearVertices}}
    {
        $\nearVertices \gets \mathtt{Near}\left(\vertexSet,\vmin,\rbitstar\right)$\label{algo:expand:nearVertices}\;

        $\edgeQueue \setInsert \left\lbrace \pair{\vmin}{\statew} \in \vertexSet \times \nearVertices \hspace{1.5pt} \middle| \hspace{1.5pt} \pair{\vmin}{\statew} \not\in \edgeSet,
            \vphantom{\gBelow{\vmin} + \cBelow{\vmin}{\statew} + \hBelow{\statew} < \ccur, \gBelow{\vmin} + \cBelow{\vmin}{\statew} < \gAbove{\statew}}\right.$\label{algo:expand:insertRewire}\\
        \skipln$\hspace{50pt}\gBelow{\vmin} + \cBelow{\vmin}{\statew} + \hBelow{\statew} < \ccur,$\\
            \skipln$\hspace{50pt}\left.\vphantom{\pair{\vmin}{\statew} \in \vertexSet \times \nearVertices \;\; \pair{\vmin}{\statew} \not\in \edgeSet, \gBelow{\vmin} + \cBelow{\vmin}{\statew} + \hBelow{\statew} < \ccur,}
        \gBelow{\vmin} + \cBelow{\vmin}{\statew} < \gAbove{\statew}\right\rbrace$\;
        
        $\unexpandedVertices \setRemove \set{\vmin}$\label{algo:expand:markOld}\label{algo:expand:endNearVertices}\;
    }
\end{algorithm}%
\subsection[Vertex Expansion]{Vertex Expansion (\algoline{algo:bitstar}{updateEdgeQueue}; \algo{algo:expand})}\label{sec:bitstar:algo:expandVertex}
The function $\mathtt{ExpandNextVertex}\left(\vertexQueue,\edgeQueue,\ccur\right)$ removes the front of the vertex queue (\algoline{algo:expand}{popVertexQueue}) and adds its outgoing edges in the \ac{RGG} to the edge queue.
The \ac{RGG} is defined using the results of \citet{karaman_ijrr11} to limit graph complexity while maintaining almost-sure asymptotic convergence to the optimum.
Edges exist between a vertex and the $\kbitstar$-closest states or all states within a distance of $\rbitstar$, with
\begin{align}
    \rbitstar &> \rbitstarmin,\nonumber\\
    \rbitstarmin &\coloneqq \left(2 \left(1 + \frac{1}{\dimension}\right)
                                \left(\frac{\min\left\lbrace\lebesgue{\stateSet},\lebesgue{\fBelowSet}\right\rbrace}{\unitBall{\dimension}}\right)\nonumber
                                \vphantom{\left(\frac{\log\left(\card{\vertexSet}+\card{\unconnectedSet} - \samplesPerBatch\right)}{\card{\vertexSet}+\card{\unconnectedSet}- \samplesPerBatch}\right)}\right.\\
                                &\hspace{40pt}
                                \left.\vphantom{2 \left(1 + \frac{1}{\dimension}\right)
                                                \left(\frac{\min\left\lbrace\lebesgue{\stateSet},\lebesgue{\fBelowSet}\right\rbrace}{\unitBall{\dimension}}\right)}
                                \left(\frac{\log\left(\card{\vertexSet}+\card{\unconnectedSet} - \samplesPerBatch\right)}{\card{\vertexSet}+\card{\unconnectedSet}- \samplesPerBatch}\right)
                            \right)^{\frac{1}{\dimension}},\label{eqn:radius:bitstar}
\end{align}
and
\begin{align}
    \kbitstar &> \kbitstarmin,\nonumber\\
    \kbitstarmin &\coloneqq e\left(1+\frac{1}{\dimension}\right)\log\left(\card{\vertexSet}+\card{\unconnectedSet} - \samplesPerBatch\right),\label{eqn:knearest:bitstar}
\end{align}
where $\card{\cdot}$ is the cardinality of a set, $\samplesPerBatch$ is the number of samples added in the last batch,  $\lebesgue{\cdot}$ is the Lebesgue measure of a set (e.g., the \textit{volume}), and $\unitBall{\dimension}$ is the Lebesgue measure of an $n$-dimensional unit ball.
Recent work has presented different expressions \citep{janson_ijrr15} and expressions for non-Euclidean spaces \citep{kleinbort_wafr16}.

This connection limit is calculated from the cardinality of the graph \emph{minus} the $\samplesPerBatch$ new samples to simplify proving almost-sure asymptotic optimality (Section~\ref{sec:anal}).
This lower bound will be large for the initial sparse batches but it can be thresholded with a maximum edge length, as is done by \ac{RRTstar}, i.e., $\rbitstar' \coloneqq \min\set{\rbitstarmax, \rbitstar}$.
The function $\mathtt{Near}$ returns the states that meet the selected \ac{RGG} connection criteria for a given vertex.

Every vertex in the tree is either expanded or pruned in every batch.
Processing all the outgoing edges from vertices would result in \ac{BITstar} repeatedly considering the same previously rejected edges.
This can be avoided by using the sets of never-expanded vertices, $\unexpandedVertices$, and new samples, $\newSet$, to add only \emph{previously unconsidered} edges to the edge queue.
\squeezeWidowedWords

Whether edges to unconnected samples are new depends on whether the source vertex has previously been expanded (\algoline{algo:expand}{newForSamples}).
If it has not been expanded then none of its outgoing connections have been considered and all nearby unconnected samples are potential descendants (\algoline{algo:expand}{nearAllSamples}).
If it has been expanded then any connections to `old' unconnected samples have already been considered and rejected and only the `new' samples are considered as potential descendants (\algoline{algo:expand}{nearNewSamples}).
The subset of these potential edges that could improve the current solution are added to the queue in both situations (\algoline{algo:expand}{insertNearSamples}).

Whether edges to connected samples (i.e., \emph{rewirings}) are new also depends on whether the source vertex has been expanded (\algoline{algo:expand}{newForVertices}).
If it has not been expanded then all nearby connected vertices are considered as potential descendants (\algoline{algo:expand}{nearVertices}).
The subset of these potential edges that could improve the current solution \emph{and} the current tree are added to the queue (\algoline{algo:expand}{insertRewire}) and the vertex is then marked as expanded (\algoline{algo:expand}{markOld}).

If a vertex has previously been expanded then no rewirings are considered.
Improvements in the tree may now allow a previously considered edge to improve connected vertices but considering these connections would require repeatedly reconsidering infeasible edges.
As in \ac{RRTstar}, this lack of \emph{propagated rewiring} has no effect on almost-sure asymptotic optimality.

\begin{algorithm}[tbp]
    \algorithmStyle
    \caption[Prune (\acs{BITstar})]%
    {$\mathtt{Prune}\left(\treeGraph=\pair{\vertexSet}{\edgeSet},\; \unconnectedSet\subset\stateSet,
            \vphantom{\ccur\in\positiveReal}\right.$\\
            $\left.\hspace{88pt}\vphantom{\treeGraph=\pair{\vertexSet}{\edgeSet},\; \unconnectedSet\subset\stateSet,}
        \ccur\in\positiveReal\right)$
    }\label{algo:prune}
    $\reuseSet \gets \emptyset$\label{algo:prune:resetReuse}\;
    $\unconnectedSet \setRemove \setst{\statex\in\unconnectedSet}{\fBelow{\statex}\geq\ccur}$\label{algo:prune:pruneSamples}\;
    \ForAll{$\statev\in\vertexSet$ {\rm \bf in\, order\, of\, increasing} $\gAbove{\statev}$\label{algo:prune:iterateGraph}}
    {
        \If{$\fBelow{\statev} > \ccur$ {\rm \bf or} $\gAbove{\statev}+\hBelow{\statev} > \ccur$\label{algo:prune:pruneCond}}
        {
            $\vertexSet \setRemove \set{\statev};\;$
            $\solnVertices \setRemove \set{\statev};\;$
            $\unexpandedVertices \setRemove \set{\statev}$\label{algo:prune:rmVertex}\;
            $\vparent \gets \mathtt{Parent}\left(\statev\right)$\label{algo:prune:getParent}\;
            $\edgeSet \setRemove \set{\pair{\vparent}{\statev}}$\label{algo:prune:rmEdge}\;
            \If{$\fBelow{\statev} < \ccur$\label{algo:prune:reuseCond}}
            {
                $\reuseSet \setInsert \set{\statev}$\label{algo:prune:addReuse}\;
            }
        }
    }
    \Return{$\reuseSet$}\label{algo:prune:returnReuse}\;
\end{algorithm}%
\subsection[Graph Pruning]{Graph Pruning (\algoline{algo:bitstar}{prune}; \algo{algo:prune})}\label{sec:bitstar:algo:prune}
The function, $\mathtt{Prune}\left(\treeGraph,\unconnectedSet,\ccur\right)$, reduces the complexity of both the approximation of the continuous planning problem (i.e., the implicit \ac{RGG}) and its search (i.e., the explicit spanning tree) by limiting them to the informed set.
It removes any states that can \emph{never} provide a better solution and disconnects any vertices that cannot provide a better solution \emph{given the current tree}.
Disconnected vertices that \emph{could} improve the solution with a better connection are reused as new samples in the next batch to maintain uniform sample density in the informed set. 
This assures that every vertex is either expanded or pruned in each batch as assumed by $\mathtt{ExpandNextVertex}$ to avoid reconsidering edges (Section~\ref{sec:bitstar:algo:expandVertex}).

The set of recycled vertices is initialized as an empty set (\algoline{algo:prune}{resetReuse}) and all unconnected states that cannot provide a better solution (i.e., are not members of the informed set) are removed (\algoline{algo:prune}{pruneSamples}).
The connected vertices are then incrementally pruned in order of increasing cost-to-come (\algoline{algo:prune}{iterateGraph}) by identifying those that can never provide a better solution or improve the solution given the \emph{current} tree (\algoline{algo:prune}{pruneCond}).
Vertices that fail either condition are removed from the tree by disconnecting their incoming edge and removing them from the vertex set and any labelling sets (\algolines{algo:prune}{rmVertex}{rmEdge}).

Any disconnected vertex that could provide a better solution (i.e., is a member of the informed set) is reused as a sample in the next batch (\algolines{algo:prune}{reuseCond}{addReuse}).
This maintains uniform sample density in the informed set and assures that vertices will be reconnected if future improvements allow them to provide a better solution.
This set of disconnected vertices is returned to \ac{BITstar} at the end of the pruning procedure (\algoline{algo:prune}{returnReuse}).

\subsection{Practical Considerations}\label{sec:bitstar:algo:pract}
\algos{algo:bitstar}{algo:prune} describe a generic version of \ac{BITstar} and leave room for a number of practical improvements depending on the specific implementation.

Searches (e.g., \algoline{algo:expand}{nearAllSamples}) can be implemented efficiently with appropriate datastructures that do not require an exhaustive global search \citeParenthetical{e.g., $k$-d trees or randomly transformed grids}{kleinbort_icra15}.
Pruning (\algoline{algo:bitstar}{prune}; \algo{algo:prune}) is computationally expensive and should only occur when a new solution has been found or limited to \emph{significant} changes in solution cost.

In an object-oriented programming language, many of the sets (e.g., $\newSet$, $\unexpandedVertices$) can be implemented more efficiently as labels.
The cost-to-come to a state in the current tree, $\gAbove{\cdot}$, can also be implemented efficiently using back pointers.

While queues can be implemented efficiently by using containers that sort on insertion, the value of elements in the vertex and edge queues will change when vertices are rewired.
There appears to be little practical difference between efficiently resorting the affected elements in these queues and only lazily resorting the queue before finishing to assure no elements have been missed.

Depending on the datastructure used for the edge queue, it may be beneficial to remove unnecessary entries when a new edge is added to the spanning tree, i.e., by adding
\begin{align*}
    \edgeQueue \setRemove &\left\lbrace \pair{\statev}{\xmin}\in\edgeQueue \;\;\middle|\;\; \gBelow{\statev}
                                \vphantom{+ \cBelow{\statev}{\xmin} \geq \gAbove{\xmin}}\right.\\
                               &\hspace{80pt}
                                \left.\vphantom{\pair{\statev}{\xmin}\in\edgeQueue \gBelow{\statev}}
                          {} + \cBelow{\statev}{\xmin} \geq \gAbove{\xmin}\right\rbrace
\end{align*}
after \algoline{algo:bitstar}{addEdge}.

\section{Analysis}\label{sec:anal}
\ac{BITstar} performance is analyzed theoretically using the results of \citet{karaman_ijrr11}.
It is shown to be probabilistically complete (Theorem~\ref{thm:pc}) and almost-surely asymptotically optimal (Theorem~\ref{thm:asao}).
Its search ordering is also shown to be equivalent to a lazy version of the ordering used in \ac{LPAstar} and \ac{TLPAstar} (Lemma~\ref{lem:lpa}).

\begin{thm}[Probabilistic completeness of \acs{BITstar}]\label{thm:pc}
    The probability that \ac{BITstar} finds a feasible solution to a given path planning problem, if one exists, when given infinite samples is one,
    \begin{align*}
        \liminf_{\totalSamples \to \infty}\probSymb &\left(\samplePathBIT\in\pathSet,\, \samplePathBIT\left(0\right) = \xstart,
                                                        \vphantom{\samplePathBIT\left(1\right) \in \goalSet}\right.\\
                                                        &\hspace{1in}\left.\vphantom{\samplePathBIT\in\pathSet,\, \samplePathBIT\left(0\right) = \xstart,}
                                                    \samplePathBIT\left(1\right) \in \goalSet\right) = 1,
    \end{align*}
    where $\totalSamples$ is the number of samples, $\samplePathBIT$ is the path found by \ac{BITstar} from those samples, and $\pathSet$ is the set of all feasible, collision-free paths.
\end{thm}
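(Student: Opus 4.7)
The plan is to reduce probabilistic completeness of \acs{BITstar} to the existing \ac{RGG} results of \citet{karaman_ijrr11}, exploiting the fact that \acs{BITstar}'s connection radius in \eqref{eqn:radius:bitstar} is of the same form as that of \acs{PRMstar}/\acs{RRTstar}. Before the first feasible solution is found, $\ccur = \infty$, so the informed set $\fBelowSet$ coincides with $\stateSet$ and each batch produced by \texttt{Sample} is a set of $\samplesPerBatch$ \ac{iid} uniform samples on $\stateSet$ (equivalently on $\freeSet$ after obstacle rejection). Thus after $\batches$ batches the sample set is distributionally identical to that of an $\samplesPerBatch\batches$-sample \acs{PRMstar}/\acs{RRTstar} run, and the implicit \ac{RGG} induced by $\rbitstar$ has the same probabilistic connectivity properties.

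The steps I would carry out, in order, are the following. First, I would observe that the edge-processing conditions of \algolines{algo:bitstar}{canBeBetterSoln}{canRealEdgeBeBetterSoln} become vacuous while $\ccur=\infty$: Conditions~\ref{item:edge:estBetterSoln} and \ref{item:edge:realBetterSoln} are satisfied automatically, and Conditions~\ref{item:edge:estBetterGraph} and \ref{item:edge:realBetterGraph} hold for every edge into a not-yet-connected sample. Consequently, within each batch \acs{BITstar} never triggers the queue-clearing branch on \algoline{algo:bitstar}{clearQueue} until both queues have been drained. Second, I would argue that, prior to a solution, every vertex currently in $\vertexSet$ is either expanded in the current batch (filling $\edgeQueue$ with its outgoing \ac{RGG}-edges via \algo{algo:expand}) or pruned and recycled into the next batch by \algo{algo:prune}; combined with the bookkeeping via $\unexpandedVertices$ and $\newSet$, this guarantees that every edge $\pair{\statev}{\statew}$ of the implicit \ac{RGG} is offered to \algoline{algo:bitstar}{popEdgeQueue} at least once across the batches. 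Third, I would invoke the \acs{PRMstar} connectivity result: for a feasible path $\pathSeq$ in $\freeSet$, the probability that no sequence of \ac{RGG}-edges of length $\leq \rbitstar$ traces a collision-free path from $\xstart$ into $\goalSet$ decays to zero as $\totalSamples\to\infty$. Fourth, combining the previous two steps, \acs{BITstar} \ac{aas} evaluates all edges on such an \ac{RGG}-path, each passes the edge-processing conditions since $\ccur=\infty$, and the target of the last accepted edge enters $\solnVertices$ via \algolines{algo:bitstar}{goalStart}{goalEnd}, yielding a finite $\ccur$ and hence a solution path.

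The main obstacle is the second step: justifying that the lazy, batch-partitioned edge generation of \algo{algo:expand} covers every \ac{RGG}-edge at least once. Concretely, an already-expanded vertex $\vmin\notin\unexpandedVertices$ only considers connections to $\newSet\cap\unconnectedSet$ on \algoline{algo:expand}{nearNewSamples}, so one must verify by induction on the batch index that for every pair $\pair{\statev}{\statew}$ with $\statew$ an eventual sample and $\statev$ an eventual vertex, there is a batch in which either $\statev\in\unexpandedVertices$ (and then $\statew$ is already present as an unconnected sample or a connected vertex) or $\statew\in\newSet$. I would complement this with a subsidiary lemma that $\rbitstar$, evaluated on $\card{\vertexSet}+\card{\unconnectedSet}-\samplesPerBatch$, is asymptotically no smaller than the $\rrrtstar$ used by \citet{karaman_ijrr11} on the same total number of samples, so that the \acs{PRMstar}-style connectivity bound still applies, and I would address the thresholding by $\rbitstarmax$ by noting that only the asymptotic regime (where $\rbitstar<\rbitstarmax$ \ac{aas}) is relevant.
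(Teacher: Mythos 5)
Your proposal is correct, but it takes a genuinely different route from the paper. The paper's proof of Theorem~\ref{thm:pc} is a one-line deferral: completeness ``follows from'' the proof of almost-sure asymptotic optimality (Theorem~\ref{thm:asao}), which is itself argued by showing that \ac{BITstar}, run on a given sample sequence, considers a superset of the edges that \ac{RRTstar} would consider under an equivalent connection limit $\rbitstar \geq \rrrtstar$, and therefore inherits \ac{RRTstar}'s guarantees from \citet{karaman_ijrr11}. You instead prove completeness directly by reducing to the \ac{PRMstar}-style \ac{RGG} connectivity result, and you supply the one piece of reasoning the paper leaves implicit: that the lazy, batch-partitioned expansion actually offers every implicit \ac{RGG} edge to the queue at least once. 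Your resolution of that obstacle does go through in the pre-solution regime, for exactly the reasons you suspect: while $\ccur=\infty$ no vertex is ever pruned (the conditions on \algoline{algo:prune}{pruneCond} cannot fire) and the queues are never cleared early on \algoline{algo:bitstar}{clearQueue}, so every tree vertex is re-inserted into $\vertexQueue$ each batch and fully expanded; a vertex's first expansion (while in $\unexpandedVertices$) covers all currently unconnected samples including the goal states seeded at initialization, and each subsequent expansion covers that batch's $\newSet$, so the induction on batch index closes. Your subsidiary lemma on $\rbitstar$ is the same observation the paper makes (the radius in \eqref{eqn:radius:bitstar} is computed as if for the first sample of the batch and is monotonically decreasing), and your dismissal of the $\rbitstarmax$ threshold is sound since $\rbitstarmin \to 0$. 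What your approach buys is a self-contained and more rigorous completeness argument that does not route through the (informally argued) optimality theorem; what the paper's approach buys is brevity and a single reduction that covers both Theorems~\ref{thm:pc} and \ref{thm:asao} at once.
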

\begin{proof}
    Proof of Theorem~\ref{thm:pc} follows from the proof of almost-sure asymptotic optimality (Theorem~\ref{thm:asao}).
\end{proof}

\begin{thm}[Almost-sure asymptotic optimality of \acs{BITstar}]\label{thm:asao}
    The probability that \ac{BITstar} converges asymptotically towards the optimal solution of a given path planning problem, if one exists, when given infinite samples is one,
    \begin{align*}
        \prob{\limsup_{\totalSamples \to \infty}\pathCost{\samplePathBIT} = \pathCost{\bestPath}} = 1,
    \end{align*}
    where $\totalSamples$ is the number of samples, $\samplePathBIT$ is the path found by \ac{BITstar} from $\totalSamples$ samples and $\bestPath$ is optimal solution to the planning problem.
\end{thm}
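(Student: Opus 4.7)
The plan is to reduce almost-sure asymptotic optimality of \ac{BITstar} to the almost-sure asymptotic optimality of the underlying \ac{RGG} approximation, as established for \ac{PRMstar}/\ac{RRG} by \citet{karaman_ijrr11}. The core idea is that (i) every batch adds $\samplesPerBatch$ uniform samples to the current informed set $\fBelowSet$, (ii) the connection radius $\rbitstar$ in \eqref{eqn:radius:bitstar} is precisely the \ac{RGG} radius needed for almost-sure asymptotic optimality on that set, and (iii) the explicit tree that \ac{BITstar} maintains at the end of each batch is an optimal spanning tree (restricted to the informed set) of the implicit \ac{RGG} on all samples generated so far. Given these three facts, the cost of the best solution found satisfies $\limsup_{\totalSamples\to\infty}\pathCost{\samplePathBIT}=\pathCost{\bestPath}$ almost surely.

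First I would establish the graph-theoretic claim: after \ac{BITstar} exits the inner loop of a batch (i.e., when \algoline{algo:bitstar}{canBeBetterSoln} fails and both queues are cleared on \algoline{algo:bitstar}{clearQueue}), the spanning tree $\treeGraph$ is an optimal tree rooted at $\xstart$ on the subgraph of the implicit \ac{RGG} consisting of all states with $\fBelow{\cdot}<\ccur$. This follows because the vertex and edge queues are ordered by an admissible lower bound on the solution cost through each element, so the first edge failing the test $\gAbove{\vmin}+\cBelow{\vmin}{\xmin}+\hBelow{\xmin}<\ccur$ certifies that no remaining edge in the \ac{RGG} can improve the current tree. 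The bookkeeping in $\mathtt{ExpandNextVertex}$ (via $\unexpandedVertices$ and $\newSet$) ensures that every edge of the current implicit \ac{RGG} whose endpoints could contribute to a better solution has either been inserted into $\edgeQueue$ at some point or has an endpoint that was pruned because $\fBelow{\cdot}\geq\ccur$, so no relevant edge is silently omitted. I would formalise this as a lemma showing that the pair $(\unexpandedVertices,\newSet)$ is a correct bookkeeping of ``unconsidered edges.''

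Next I would handle the interaction between pruning, recycling, and focused sampling. Pruning only removes states outside $\fBelowSet$ (\algolines{algo:prune}{pruneSamples}{pruneCond}), and disconnected vertices that still lie in $\fBelowSet$ are returned in $\reuseSet$ so they re-enter the sample set without loss (\algoline{algo:prune}{addReuse}). Because new samples in each batch are drawn uniformly from $\fBelowSet$ (\algoline{algo:bitstar}{sample}), the union of all samples ever drawn in $\fBelowSet$ forms a uniformly distributed point process on $\fBelowSet$ whose cardinality grows without bound. Since $\bestPath\subset\namedSet{\trueCost{\solutionCost}}\subseteq\fBelowSet$ for all $\ccur\geq\pathCost{\bestPath}$, no part of the optimal path is ever removed from consideration.

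The final step is to invoke the \ac{RGG} convergence result. The radius $\rbitstar$ in \eqref{eqn:radius:bitstar} matches the Karaman--Frazzoli threshold with the sampling volume $\min\{\lebesgue{\stateSet},\lebesgue{\fBelowSet}\}$ and sample count $\card{\vertexSet}+\card{\unconnectedSet}-\samplesPerBatch$; this is precisely the condition under which the minimum-cost path in the implicit \ac{RGG} converges almost surely to $\pathCost{\bestPath}$. Combining with the graph-theoretic claim that \ac{BITstar} returns the optimal path in this implicit \ac{RGG} at the end of each batch yields $\pathCost{\samplePathBIT}\to\pathCost{\bestPath}$ almost surely, as required. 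The main obstacle is the rigorous justification of the bookkeeping lemma in the second step: one must argue that never-re-expanding a vertex against old samples (the asymmetric treatment in \algos{algo:expand}{algo:expand} of $\unexpandedVertices$ and $\newSet$) does not lose any edge that the \ac{RGG} connection rule would have produced and that could improve the tree; this mirrors the standard argument that \ac{RRTstar}'s omission of cascaded rewiring does not harm almost-sure asymptotic optimality, and I would adapt that argument to the batch setting here.
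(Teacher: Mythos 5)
Your overall strategy---reduce to the almost-sure asymptotic optimality of the implicit \ac{RGG} and then argue that \ac{BITstar} extracts the optimal path from that \ac{RGG} at the end of every batch---is not the paper's route, and its central lemma is false for the algorithm as specified. \ac{BITstar} deliberately does \emph{not} propagate rewirings: when an edge improves $\gAbove{\xmin}$ for a vertex already in the tree, that vertex is not re-inserted into $\vertexQueue$, and on any later expansion it only considers edges to \emph{new} samples (\algoline{algo:expand}{nearNewSamples}). Outgoing edges of a rewired vertex and of its descendants that were previously rejected under the old, larger cost-to-come are never reconsidered within the batch. Combined with lazy edge evaluation (edges are ordered by $\cBelow{\cdot}{\cdot}$ but cost $\cTrue{\cdot}{\cdot}$, so vertices routinely acquire suboptimal connections that later need rewiring), this means the spanning tree at the end of a batch is in general \emph{not} the shortest-path tree of the implicit \ac{RGG} restricted to the informed set; the paper itself describes the search as a lazy \ac{TLPAstar} limited to a single cost propagation. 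You flag the bookkeeping of $\unexpandedVertices$ and $\newSet$ as ``the main obstacle'' and propose to repair it by adapting the \ac{RRTstar} no-cascaded-rewiring argument, but that argument does not establish optimality of the tree on the underlying graph (it is false for \ac{RRTstar} too); it establishes convergence by a different mechanism, and once you invoke it your \ac{RGG}-optimality lemma becomes both unnecessary and unprovable.

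The paper's proof is exactly that different mechanism: a dominance argument over \ac{RRTstar}. Given the same sample sequence partitioned into batches, every state $\statey \in \batchSet_\counterk$ has its neighbourhood taken over all states in batches $\counterj \le \counterk$, a superset of the neighbourhood \ac{RRTstar} would consider (earlier states only), and the radius \eqref{eqn:radius:bitstar}---computed from the cardinality \emph{minus} the $\samplesPerBatch$ new samples---equals the radius \ac{RRTstar} would use for the \emph{first} sample of the batch, hence $\rbitstar \ge \rrrtstar$ for every sample in the batch by monotonicity of the radii. \ac{BITstar} therefore considers at least the edges \ac{RRTstar} considers and accepts those that improve the target's cost-to-come (subject to the informed rejection of \citet{karaman_icra11}), so the convergence guarantee of \citet{karaman_ijrr11} transfers directly without any claim about optimality of the search of the \ac{RGG}. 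Your observations about pruning, sample recycling, and uniform density in the informed set are correct and implicitly needed, but they do not close the gap above.
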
%
\begin{proof}
    Theorem~\ref{thm:asao} is proven by showing that \ac{BITstar} considers at least the same edges as \ac{RRTstar} for a sequence of states, $\samplesSet = \seqset{\statex_1,\statex_2,\ldots,\statex_\totalSamples}$, and connection limit, $\rbitstar \geq \rrrtstar$.
    
    \ac{RRTstar} incrementally builds a tree from a sequence of samples.
    For each state in the~sequence, $\statex_\counterk\in\samplesSet$, it considers the neighbourhood of earlier states that are within the connection~limit,
    \begin{align*}
        \nearSetk \coloneqq &\left\lbrace\statex_\counterj\in\samplesSet \;\; \middle| \;\; \counterj < \counterk,
                                \vphantom{\norm{\statex_\counterk-\statex_\counterj}{2} \leq \rrrtstar}\right.\\
                                &\left.\vphantom{\statex_\counterj\in\samplesSet \counterj < \counterk,}
                            \hspace{1.2in}\norm{\statex_\counterk-\statex_\counterj}{2} \leq \rrrtstar\right\rbrace.
    \end{align*}
    It selects the connection from this neighbourhood that minimizes the cost-to-come of the state and then evaluates the ability of connections from this state to reduce the cost-to-come of the other states in the neighbourhood.

    Given the same sequence of states, \ac{BITstar} groups them into batches of samples, $\samplesSet = \seqset{\batchSet_1, \batchSet_2, \ldots, \batchSet_\batches}$, where each batch is a set of $\samplesPerBatch < \totalSamples$ samples, e.g., $\batchSet_1 \coloneqq \set{\statex_1,\statex_2,\ldots,\statex_\samplesPerBatch}$.
    It incrementally builds a tree by processing this batched sequence of samples.
    For each state in the sequence, $\statey\in\batchSet_\counterk$, it considers the neighbourhood of states from the same or earlier batches within the connection limit,%
    \squeezeWidowedWords%
    \begin{align*}
        \nearSetk \coloneqq \setst{\statex\in\batchSet_\counterj}{\counterj \leq \counterk,\;\norm{\statey-\statex}{2} \leq \rbitstar}.
    \end{align*}
    It adds the edge in this neighbourhood to the tree that minimizes the cost-to-come of the state and considers all the outgoing edges to connect its neighbours.
    This set of edges contains all those considered by \ac{RRTstar} for an equivalent connection limit, $\rbitstar \geq \rrrtstar$, given that $\samplesPerBatch \geq 1$.
    
    As \eqref{eqn:radius:bitstar} uses the same connection radius for a batch that \ac{RRTstar} would use for the first sample in the batch and the connection radius of both are monotonically decreasing, this shows that \ac{BITstar} considers at least the same edges as \ac{RRTstar}.
    From these edges, \ac{BITstar} selects those that improve the cost-to-come of the target state and could currently provide a better solution as in \citet{karaman_icra11}.
    It is therefore almost-surely asymptotically optimal as stated in Theorem~\ref{thm:asao}.
\end{proof}%

\ac{BITstar} searches the \ac{RGG} in order of potential solution quality using an edge queue.
This is shown to be equivalent to the vertex queue ordering used by \ac{LPAstar}/\ac{TLPAstar} with a \emph{lazy} approximation of incoming edge costs (Lemma~\ref{lem:lpa}).
The search itself is not equivalent to \ac{LPAstar} as \ac{BITstar} does not reconsider outgoing connections from rewired vertices (i.e., it does not propagate rewirings).
It is instead a version of \ac{TLPAstar}.

\begin{lem}[The equivalent queue ordering of \acs{BITstar} and \acs{LPAstar}/\acs{TLPAstar}]\label{lem:lpa}
    The edge ordering in \ac{BITstar} that uses first the sum of a vertex's estimated cost-to-go, estimated incoming edge cost, and current cost-to-come of its parent,
    \begin{align*}
        \gAbove{\stateu} + \cBelow{\stateu}{\statev} + \hBelow{\statev},
    \end{align*}
    then the estimated cost-to-come of the vertex,
    \begin{align*}
        \gAbove{\stateu} + \cBelow{\stateu}{\statev},
    \end{align*}
    and then the cost-to-come of its parent,
    \begin{align*}
        \gAbove{\stateu},
    \end{align*}
    is equivalent to the vertex ordering in \ac{LPAstar} \citep{koenig_ai04} and \ac{TLPAstar} \citep{aine_ai16}.
\end{lem}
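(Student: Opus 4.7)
The plan is to show that for each vertex $\statev$ in the tree (or reachable candidate), the BIT* edge-queue key of the best incoming edge to $\statev$ coincides with the LPA*/TLPA* vertex key at $\statev$, once the unknown true edge costs are replaced by their admissible estimates (the lazy approximation). Since both algorithms always process the queue element with the smallest key (popping from the front), establishing this element-wise equality of keys forces the two orderings on the queue fronts to agree.

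First I would recall the LPA*/TLPA* key of a vertex $\statev$: it is the lexicographic pair
\begin{align*}
\bigl[\min\{\gAbove{\statev}, rhs(\statev)\} + \hBelow{\statev},\;\; \min\{\gAbove{\statev}, rhs(\statev)\}\bigr],
\end{align*}
with $rhs(\statev) = \min_{\stateu \in \mathrm{pred}(\statev)} \bigl(\gAbove{\stateu} + \cTrue{\stateu}{\statev}\bigr)$. The lazy version replaces $\cTrue{\cdot}{\cdot}$ by $\cBelow{\cdot}{\cdot}$, giving a lazy $rhs$, call it $\widehat{rhs}(\statev)$. Next, for BIT*'s edge queue, I would observe that the set of edges with a given target $\statev$ has elements of the form $\pair{\stateu}{\statev}$ with keys $(\gAbove{\stateu} + \cBelow{\stateu}{\statev} + \hBelow{\statev},\; \gAbove{\stateu} + \cBelow{\stateu}{\statev},\; \gAbove{\stateu})$. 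The minimum over $\stateu$ of the first two components is achieved by the same $\stateu^\star = \argmin_\stateu (\gAbove{\stateu} + \cBelow{\stateu}{\statev})$, which gives exactly $\widehat{rhs}(\statev) + \hBelow{\statev}$ and $\widehat{rhs}(\statev)$ respectively. Because BIT* only inserts edges that strictly improve the cost-to-come of the target (Condition~\ref{item:edge:estBetterGraph} of Section~\ref{sec:bitstar:algo:addEdge}), we have $\widehat{rhs}(\statev) < \gAbove{\statev}$ for any vertex with edges in the queue, so $\min\{\gAbove{\statev}, \widehat{rhs}(\statev)\} = \widehat{rhs}(\statev)$, matching the LPA*/TLPA* key component for component.

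I would then handle tie-breaking. The first two components of BIT*'s edge key reproduce the two-component LPA*/TLPA* vertex key exactly. The third component, $\gAbove{\stateu}$, is used by BIT* only to disambiguate edges sharing a target and the same estimated $g$-plus-edge-cost; in LPA*/TLPA* the analogous disambiguation is across vertices with identical keys, which is resolved arbitrarily but consistently. I would argue that any such internal tie-breaking is compatible with, and more refined than, the LPA* convention, so the \emph{ordering} on the front of the queue is identical. Finally I would note that the correspondence is one-way at the granularity of algorithm action: BIT* pops an edge $\pair{\stateu^\star}{\statev^\star}$, whereas LPA* pops the vertex $\statev^\star$ and then relaxes through its best predecessor; in both cases the next relaxation acts on the same $(\stateu^\star, \statev^\star)$ pair.

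The main obstacle I anticipate is making the correspondence precise when the queue simultaneously contains entries for vertices that are locally inconsistent in different ways (already-connected vertices being rewired versus currently unconnected samples being expanded into the tree). I would address this by treating the two cases uniformly through $\gAbove{\cdot}$, using the convention $\gAbove{\statex}=\infty$ for $\statex\notin\vertexSet$, so that unconnected samples automatically satisfy $\widehat{rhs}(\statev)<\gAbove{\statev}$ and are included on the same footing. A secondary subtlety is that BIT* restricts the predecessor set to the RGG neighbourhood and further omits previously expanded source vertices' old outgoing edges (Section~\ref{sec:bitstar:algo:expandVertex}); I would observe that this only restricts which edges are \emph{inserted} into the queue and does not change how the queue is \emph{ordered}, which is all that Lemma~\ref{lem:lpa} asserts.
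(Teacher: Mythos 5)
Your proposal is correct and follows essentially the same route as the paper's proof: both identify the LPA*/TLPA* vertex key as a minimum of $\gAbove{\stateu}+\cTrue{\stateu}{\statev}$ over incoming edges (the \emph{rhs}-value), replace the true edge cost with its admissible estimate to obtain the lazy key, and observe that merging the per-vertex minimizations into a single edge queue preserves the ordering. Your treatment is somewhat more explicit than the paper's — in particular the component-wise key matching, the tie-breaking discussion, and the $\gAbove{\statex}=\infty$ convention for unconnected samples — but the underlying argument is the same.
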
%
\begin{proof}
    \ac{LPAstar} and \ac{TLPAstar} use a queue of vertices ordered lexicographically first on the solution cost constrained to go through the vertex and then the cost-to-come of the vertex.
    Both these terms are calculated for a vertex, $\statev \in \vertexSet$, considering all its possible incoming edges (referred to as the \textit{rhs-value} in \ac{LPAstar}), i.e.,
    \begin{align*}
    \min\limits_{\pair{\stateu}{\statev}\in\edgeSet}\left\lbrace\gAbove{\stateu}+\cTrue{\stateu}{\statev}\right\rbrace + \hBelow{\statev}\\
    \shortintertext{and}
    \min\limits_{\pair{\stateu}{\statev}\in\edgeSet}\left\lbrace\gAbove{\stateu}+\cTrue{\stateu}{\statev}\right\rbrace.
    \end{align*}%
    
    This minimum requires calculating the true edge cost between a vertex and all of its possible parents.
    This calculation is expensive in sampling-based planning (e.g., collision checking, differential constraints, etc.) and reducing its calculation is desirable.
    This can be achieved by incrementally calculating the minimum in the order given by an admissible heuristic estimate of edge cost.
    Considering edges into the vertex in order of increasing \emph{estimated} cost calculates a running minimum that can be stopped when the estimated cost through the next edge to consider is higher than the current minimum.
    
    \ac{BITstar} combines the minimum calculations for individual vertices into a single edge queue.
    This simultaneously calculates the minimum cost-to-come for each vertex in the current graph while expanding vertices in order of increasing estimated solution cost.
\end{proof}%

\section{Modifications and Extensions}\label{sec:mods}
The basic version of \ac{BITstar} presented in \algos{algo:bitstar}{algo:prune} can be modified and extended to include features that may improve performance for some planning applications.
Section~\ref{sec:mods:delay} presents a method to delay rewiring the tree until an initial solution is found.
This prioritizes exploring the \ac{RGG} to find solutions and may be beneficial in time-constrained applications.
Section~\ref{sec:mods:jit} presents a method to delay sampling until necessary.
This avoids approximating regions of the planning problem that are never searched, improves performance in large planning problems, and avoids the need to define \textit{a priori} limits in unbounded problems.
\squeezeWidowedWords

Section~\ref{sec:mods:drop} presents a method for \ac{BITstar} to occasionally remove unconnected samples while maintaining almost-sure asymptotic optimality.
This avoids repeated connection attempts to infeasible states and may be beneficial in problems where many regions of the free space are unreachable.
\squeezeWidowedWords

Section~\ref{sec:mods:sorrtstar} extends the idea of reducing the number of connections attempted per sample during an ordered search to develop \ac{SORRTstar}.
This version of \ac{RRTstar} uses batches of samples to order its search of the problem domain by potential solution quality, as in \ac{BITstar}, but uses a \texttt{steer} function and only makes one connection attempt per sample, as in \ac{RRTstar}.

\subsection{Delayed Rewiring}\label{sec:mods:delay}
Many robotic systems have a finite amount of computational time available to solve planning problems.
In these situations, rewiring the existing tree before an initial solution is found reduces the likelihood of \ac{BITstar} solving the given problem.
A method to delay rewirings until a solution is found is presented in \algo{algo:delay} as simple modifications to $\mathtt{ExpandNextVertex}$, with changes highlighted in red (cf. \algo{algo:expand}).
The rewirings are still performed once a solution is found and this method does not affect almost-sure asymptotic optimality.

Rewirings are delayed by separately tracking whether vertices are unexpanded to nearby unconnected samples, $\unexpandedVertices$, and unexpanded to nearby connected vertices, $\delayedVertices$.
This allows \ac{BITstar} to prioritize finding a solution by only considering edges to new samples until a solution is found and then improving the graph by considering rewirings from the existing vertices.

A vertex is moved from the never-expanded set to the delayed set when edges to its potential unconnected descendants are added to the edge queue (\algolines{algo:delay}{markOld}{markDelayed}).
This allows future expansions of the vertex to avoid old unconnected samples while tracking that the vertex's outgoing rewirings have not yet been considered.
Vertices in the delayed set are expanded as potential rewirings of other connected vertices once a solution is found (\algoline{algo:delay}{newForVertices}) and the delayed label is removed (\algoline{algo:delay}{markExpanded}).

This extension requires initializing and resetting $\delayedVertices$ along with the other labelling sets (e.g., \algoline{algo:bitstar}{initLabels} and \algoline{algo:prune}{rmVertex}).
This extension is included in the publicly available \ac{OMPL} implementation of \ac{BITstar}.
\begin{algorithm}[tp]
    \algorithmStyle
    \caption[ExpandNextVertex (Delayed Rewiring)]%
    {$\mathtt{ExpandNextVertex}\left(\vertexQueue\subseteq\vertexSet,
                                    \vphantom{\edgeQueue\subseteq\vertexSet\times\left(\vertexSet\cup\stateSet\right),\; \ccur\in\positiveReal}\right.$\\
                                    $\left.\hspace{85pt}\vphantom{\vertexQueue\subseteq\vertexSet,}
                                \edgeQueue\subseteq\vertexSet\times\left(\vertexSet\cup\stateSet\right),\; \ccur\in\positiveReal\right)$
    }\label{algo:delay}
    $\vmin \gets \mathtt{PopBestInQueue}\left(\vertexQueue\right)$\;
    \If{$\vmin\in\unexpandedVertices$}
    {
        $\nearSet \gets \mathtt{Near}\left(\unconnectedSet,\vmin,\rbitstar\right)$\;
        \newAlgoLine{$\unexpandedVertices \setRemove \set{\vmin}$\label{algo:delay:markOld}\;}
        \newAlgoLine{$\delayedVertices \setInsert \set{\vmin}$\label{algo:delay:markDelayed}\;}
    }
    \Else
    {
        $\nearSet \gets \mathtt{Near}\left(\newSet\cap\unconnectedSet,\vmin,\rbitstar\right)$\label{algo:delay:nearNewSamples}\;
    }
    $\edgeQueue \setInsert \left\lbrace \pair{\vmin}{\statex} \in \vertexSet\times \nearSet \;\; \middle| \;\; \gBelow{\vmin}
        \vphantom{+ \cBelow{\vmin}{\statex} + \hBelow{\statex} < \ccur}\right.$\\
        \skipln$\left.\hspace{100pt}\vphantom{\pair{\vmin}{\statex} \in \vertexSet\times \nearSet \;\; \gBelow{\vmin}}
    {}+ \cBelow{\vmin}{\statex} + \hBelow{\statex} < \ccur\right\rbrace$\;
    
    \If{$\vmin\in\newAlgoLine{\delayedVertices}$ \newAlgoLine{{\rm \bf and} $\ccur < \infty$}\label{algo:delay:newForVertices}}
    {
        $\nearVertices \gets \mathtt{Near}\left(\vertexSet,\vmin,\rbitstar\right)$\;

        $\edgeQueue \setInsert \left\lbrace \pair{\vmin}{\statew} \in \vertexSet \times \nearVertices \hspace{1.5pt} \middle| \hspace{1.5pt} \pair{\vmin}{\statew} \not\in \edgeSet,
            \vphantom{\gBelow{\vmin} + \cBelow{\vmin}{\statew} + \hBelow{\statew} < \ccur, \gBelow{\vmin} + \cBelow{\vmin}{\statew} < \gAbove{\statew}}\right.$\\
        \skipln$\hspace{50pt}\gBelow{\vmin} + \cBelow{\vmin}{\statew} + \hBelow{\statew} < \ccur,$\\
            \skipln$\hspace{50pt}\left.\vphantom{\pair{\vmin}{\statew} \in \vertexSet \times \nearVertices \;\; \pair{\vmin}{\statew} \not\in \edgeSet, \gBelow{\vmin} + \cBelow{\vmin}{\statew} + \hBelow{\statew} < \ccur,}
        \gBelow{\vmin} + \cBelow{\vmin}{\statew} < \gAbove{\statew}\right\rbrace$\;

        $\newAlgoLine{\delayedVertices} \setRemove \set{\vmin}$\label{algo:delay:markExpanded}\;
    }
\end{algorithm}%

\subsection{Just-in-Time (\acs{JIT}) Sampling}\label{sec:mods:jit}
Many robotic systems operate in environments that are unbounded (e.g., the outdoors).
These problems have commonly required using \textit{a priori} search limits to make the problem domain tractable.
Selecting these limits can be difficult and may prevent finding a solution (e.g., defining a domain that does not contain a solution) or reduce performance (e.g., defining a domain too large to search sufficiently).
A method to avoid these problems in \ac{BITstar} by generating samples \emph{just in time} (\acs{JIT}) is presented in \algo{algo:jit} and accompanying modifications to the main algorithm.
This modification generates samples only when needed by \ac{BITstar}'s search while still maintaining uniform sample density and almost-sure asymptotic convergence to the optimum.
This avoids approximating regions of the problem that are not used to find a solution and allows \ac{BITstar} to operate directly on large or unbounded planning problems.

\ac{BITstar} searches a planning problem by constructing and searching an implicit \ac{RGG} defined by a number of uniformly distributed samples (vertices) and their relative distances (edges).
These samples are given explicitly in \algos{algo:bitstar}{algo:prune} but are not used until they could be a descendant of a vertex in the tree.
This occurs when the samples are within the local neighbourhood of the vertex (Fig.~\ref{fig:jit}).
The cost associated with the informed set containing the neighbourhood of a vertex, $\nearSet$, for planning problems seeking to minimize path length,
\begin{align*}%
    \creqd \coloneqq \max_{\statex\in\nearSet}\set{\fBelow{\statex}},
\end{align*}%
is bounded from above as,
\begin{align*}%
    \creqd \leq \fBelow{\statev} + 2\rbitstar.
\end{align*}%

\begin{figure}[tbp]
    \centering
    \includegraphics[width=\columnwidth]{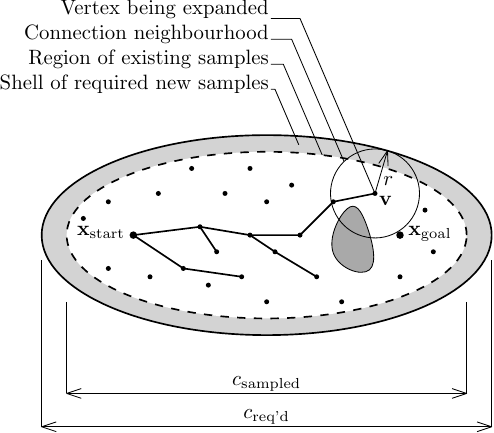}
    \caption[An illustration of \acf{JIT} sampling.]
    {%
        \figureCaptionStyle%
        An illustration of \acf{JIT} sampling.
        Samples are generated only when they are necessary for the expansion of a vertex, $\statev$, into the edge queue.
        This is accomplished while maintaining uniform sample density by an expanding informed set.
        The informed set is expanded by adding uniformly distributed samples in the prolate hyperspheroidal shell defined by the difference between the maximum heuristic value of the neighbourhood, $\creqd$, and the currently sampled informed set, $\csampled$.
    }
    \label{fig:jit}
\end{figure}%

\ac{JIT} sampling only generates samples when necessary to expand vertices by incrementally sampling this growing informed set.
It does this while maintaining uniform sample density by tracking the previously sampled size of the set, $\csampled$, and only generating the new samples necessary to increase it.
The function $\mathtt{UpdateSamples}\left(\statev, \csampled, \ccur\right)$ generates \ac{JIT} samples for vertex expansion in problems seeking to minimize path length (\algo{algo:jit}).
The required size of the sampled informed set, $\creqd$, is a function of the neighbourhood and the maximum size of the informed set (\algoline{algo:jit}{reqCost}).
If it is higher than the previously sampled cost, $\csampled$, then the local neighbourhood has not been completely sampled and new samples are generated (\algoline{algo:jit}{isNotSampled}).
If it is lower then the neighbourhood has already been sampled and no new samples are generated.

\begin{algorithm}[tbp]%
    \algorithmStyle
    \caption[UpdateSamples (Just-in-time Sampling)]%
    {$\mathtt{UpdateSamples}\left(\statev \in \vertexSet,\; \csampled\leq\ccur,
                                \vphantom{\ccur\in\positiveReal}\right.$\\
                                $\left.\hspace{130pt}\vphantom{\statev \in \vertexSet,\; \csampled\leq\ccur,}
                            \ccur\in\positiveReal\right)$
    }\label{algo:jit}
    $\creqd \gets \min{\left\lbrace\fBelow{\statev} + 2\rbitstar,\ccur\right\rbrace}$\;\label{algo:jit:reqCost}
    \If{$\creqd > \csampled$\label{algo:jit:isNotSampled}}
    {
        $\sampleMeasure \gets \phsMeasure\left(\creqd\right) - \phsMeasure\left(\csampled\right)$\label{algo:jit:reqVol}\;
        $\samplesPerBatch' \gets \sampleDensity \sampleMeasure$\label{algo:jit:reqNum}\;
        $\newSet \setInsert \mathtt{Sample}\left(\samplesPerBatch', \csampled, \creqd\right)$\label{algo:jit:sample}\;
        $\unconnectedSet \setInsert \newSet$\label{algo:jit:add}\;
        $\csampled \gets \creqd$\label{algo:jit:markSampled}\;
    }
\end{algorithm}%

The number of required new samples, $\samplesPerBatch'$, can be calculated from the chosen batch sample density, $\rho$, and the volume of the shell being sampled (\algolines{algo:jit}{reqVol}{reqNum}).
These samples are added to the set of new states and the set of unconnected states (\algolines{algo:jit}{sample}{add}).
Finally, the sampled cost is updated to reflect the new size of the sampled informed set (\algoline{algo:jit}{markSampled}).

The function $\phsMeasure\left(\cdot\right)$ calculates the measure of the prolate hyperspheroid defined by the start and goal with the given cost,
\begin{align*}
    \lebesgue{\fBelowSet} \leq \lebesgue{\phsSet} = \frac{\ccur \left( \ccur^2 - \cmin^2 \right)^{\frac{\dimension-1}{2}} \unitBall{\dimension}}{2^\dimension},
\end{align*}
where $\unitBall{\dimension}$ is the Lebesgue measure of an $\dimension$-dimensional unit ball,
\begin{align*}
    \unitBall{\dimension} \coloneqq \frac{\pi^{\frac{\dimension}{2}}}{\gammaFunc{\frac{\dimension}{2} + 1}},
\end{align*}
and $\gammaFunc{\cdot}$ is the gamma function, an extension of factorials to real numbers \citep{gamma_function}.
The function $\mathtt{Sample}\left(\samplesPerBatch',\csampled,\creqd\right)$ generates informed samples within the cost interval $\left[\csampled,\creqd\right)$ and may be implemented with rejection sampling.

Using \algo{algo:jit} requires modifying \algoAnd{algo:bitstar}{algo:expand}.
The $\mathtt{Sample}$ function of a batch (\algoline{algo:bitstar}{sample}) is replaced with an initialization of the sampled cost variable, $\csampled\gets0$, and $\mathtt{UpdateSamples}\left(\statev, \csampled, \ccur\right)$ is added to the front of $\mathtt{ExpandNextVertex}$ (\algo{algo:expand}).
This extension is included in the publicly available \ac{OMPL} implementation of \ac{BITstar}.
\squeezeWidowedLine

\subsection{Sample Removal}\label{sec:mods:drop}
\ac{BITstar} approximates continuous planning problems with an implicit \ac{RGG}.
It efficiently increases the accuracy of this approximation by focusing it to the informed set.
It alternately increases density by adding new samples and shrinks the set by searching the existing samples for better solutions.
This builds an explicit spanning tree that contains all states that could \emph{currently} provide a better solution but may not use every state in the \ac{RGG}.

States in an informed set may not be able to improve a solution for many reasons.
The approximation may be insufficiently accurate (i.e., low sample density) to capture difficult features (e.g., narrow passages) or represent sufficiently optimal paths.
The informed set may also include regions of the problem domain that cannot improve the solution due to unconsidered problem features (e.g., barriers separating free space) or because it is otherwise poorly chosen (i.e., low precision).
Unconnected samples in the first situation may later be beneficial to the search but samples in the second represent unnecessary computational cost.
Periodically removing these samples would reduce the complexity of the implicit \ac{RGG} and avoid repeatedly attempting to connect them to new vertices in the tree.

Unconnected samples can be removed while maintaining the requirements for almost-sure asymptotic optimality by modifying the \ac{RGG} connection limits to consider only uniformly distributed samples.
This can be accomplished by using the number of uniformly distributed samples added since the last removal of unconnected states in \eqref{eqn:radius:bitstar} and \eqref{eqn:knearest:bitstar}.
This simple extension is included in the publicly available \ac{OMPL} implementation of \ac{BITstar}.

\subsection[\acf{SORRTstar}]{\acf{SORRTstar}\footnotemark{}}\label{sec:mods:sorrtstar}%
\footnotetext{Pronounced \textit{sort star}.}%
\begin{algorithm}[tbp]%
    \algorithmStyle
    \caption[\acf{SORRTstar}]%
    {\acs{SORRTstar}$\left(\xstart\in\freeSet,\; \goalSet\subset\stateSet \right)$}\label{algo:sorrtstar}
    $\vertexSet \gets \left\lbrace \xstart \right\rbrace;\;$
    $\edgeSet \gets \emptyset;\;$
    $\treeGraph = \left( \vertexSet, \edgeSet \right)$\;
    $\solnVertices \gets \emptyset;\;$
    \newAlgoLine{$\sampleQueue \gets \emptyset$\;\label{algo:sorrtstar:init}}
    \For{$\iter = 1 \ldots \totalSamples$}
    {
        $\ccur \gets \min_{\vgoal\in\solnVertices} \set {\gAbove{\vgoal}}$\;
        \newAlgoLine
        {
            \If{$\sampleQueue\equiv\emptyset$\label{algo:sorrtstar:checkEmpty}}
            {
                $\sampleQueue \gets \mathtt{Sample}\left(\samplesPerBatch, \xstart, \goalSet, \ccur\right)$\label{algo:sorrtstar:addSamples}\;
            }
            $\xrand\gets\mathtt{PopBestInQueue}\left(\sampleQueue\right)$\label{algo:sorrtstar:popSample}\;
        }
        $\vnearest \gets \mathtt{Nearest}\left(\vertexSet, \xrand \right)$\;
        $\xnew \gets \mathtt{Steer}\left(\vnearest, \xrand\right)$\;
        \If{$\mathtt{CollisionFree}\left(\vnearest, \xnew\right)$}
        {
            \If{ $\xnew \in \goalSet$}
            {
                $\solnVertices \setInsert \left\lbrace \xnew \right\rbrace$\;
            }
            $\vertexSet \setInsert \left\lbrace \xnew \right\rbrace$\;
            $\nearVertices \gets \mathtt{Near}\left(\vertexSet, \xnew, \rrewire \right)$\;
            $\vmin \gets \vnearest$\;
            \ForAll{$\vnear \in \nearVertices$}
            {
                $\cnew \gets \gAbove{\vnear} + \cBelow{\vnear}{\xnew}$\;
                \If{$\cnew < \gAbove{\vmin} + \cTrue{\vmin}{\xnew}$}
                {
                    \If{$\mathtt{CollisionFree}\left( \vnear, \xnew \right)$}
                    {
                        $\vmin \gets \vnear$\;
                    }
                }
            }
            $\edgeSet \setInsert \left\lbrace\left(\vmin, \xnew \right)\right\rbrace$\;
            
            \ForAll{$\vnear \in \nearVertices$}
            {
                $\cnear \gets \gAbove{\xnew} + \cBelow{\xnew}{\vnear}$\;
                \If{$\cnear < \gAbove{\vnear}$}
                {
                    \If{$\mathtt{CollisionFree}\left( \xnew, \vnear \right)$}
                    {
                        $\vparent \gets \mathtt{Parent}\left(\vnear \right)$\;
                        $\edgeSet \setRemove \left\lbrace \left(\vparent, \vnear \right) \right\rbrace$\;
                        $\edgeSet \setInsert \left\lbrace \left( \xnew, \vnear\right)\right\rbrace$\;
                    }
                }
            }
            $\mathtt{Prune}\left(\vertexSet, \edgeSet, \ccur\right)$\;
        }
    }
    \Return{$\treeGraph$}\;
\end{algorithm}%

Approaching sampling-based planning as the search of an implicit \ac{RGG} motivates \ac{BITstar} to consider multiple connections to each sample.
Section~\ref{sec:mods:drop} presents a method to limit this number of attempts by periodically removing samples.
The natural extension of this idea is to consider only a single connection attempt per sample, as in \ac{RRTstar}.
This motivates the development of \ac{SORRTstar}, a version of \ac{RRTstar} that orders its search by potential solution quality by sorting batches of samples.

\ac{SORRTstar} is presented in \algo{algo:sorrtstar} as simple modifications of Informed \acs{RRTstar}, with changes highlighted in red \citeCompare{gammell_iros14,gammell_tro18}.
Instead of expanding the tree towards a randomly generated sample at each iteration, \ac{SORRTstar} extends the tree towards the best unconsidered sample in its current batch.
It accomplishes this by using a queue of samples, $\sampleQueue$, ordered by potential solution cost, $\fBelow{\cdot}$. %
This queue is filled with $\samplesPerBatch$ samples (\algolines{algo:sorrtstar}{checkEmpty}{addSamples}) and the search proceeds by expanding the tree towards the best sample in the queue (\algoline{algo:sorrtstar}{popSample}).
This orders the search for the $\samplesPerBatch$ samples in a batch, at which point a new batch of samples is generated and the search continues.
\squeezeWidowedWords

The function $\mathtt{PopBestInQueue}\left(\cdot\right)$ pops the best element off the front of a queue given its ordering.
A goal bias may be implemented in \ac{SORRTstar} by adding a small probability of sampling the goal instead of removing the best sample from the queue.
This algorithm is publicly available in \ac{OMPL}.

\ac{SORRTstar} can be viewed as a simplified version of \ac{BITstar} that only considers the best-possible edges.
Attempting to connect each sample once avoids the computational cost of repeated connection attempts to infeasible samples but still maintains some dependence on the sampling order.
High-utility samples (e.g., samples near the optimum) may be underutilized depending on the state of the tree when they are found.
This can become problematic if these samples have a low sampling probability (e.g., samples in narrow passages).
Making multiple connections attempts per sample and retaining samples for multiple batches allows \ac{BITstar} to exploit these useful samples more than algorithms such as \ac{SORRTstar}.
As seen in Section~\ref{sec:exp}, this results in different performance, especially in high state dimensions.

\section{Experiments}\label{sec:exp}
The benefits of ordering the search of continuous planning problems are demonstrated on simulated problems in $\real^2$, $\real^4$, $\real^8$, and $\real^{16}$ (Section~\ref{sec:exp:sim}) and one-{} and two-armed problems for \ac{HERB} (Section~\ref{sec:exp:herb}) using \ac{OMPL}.\footnotemark{}
\ac{BITstar} is compared to the \ac{OMPL} versions of \ac{RRT}, \acs{RRT}-Connect, \ac{RRTstar}, \RRTsharp{} (i.e., \RRTx{} with $\epsilon = 0$), \ac{FMTstar}, Informed \acs{RRTstar}, and \ac{SORRTstar}.
\footnotetext{The experiments were run on a laptop with $16$~GB of RAM and an Intel i7-4810MQ processor running Ubuntu 14.04 ($64$-bit).}

\begin{figure}[tb]
    \centering
    \includegraphics[width=\columnwidth]{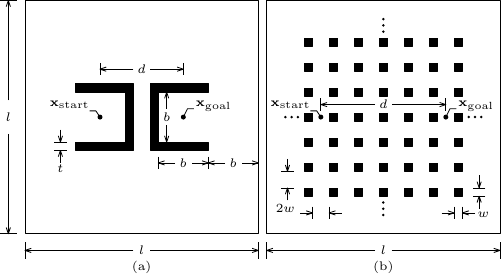}
    \caption%
    {%
        An illustration of the planning problems used in Sections~\ref{sec:exp:sim:bug} and \ref{sec:exp:sim:reg}.
        They are used to investigate algorithm performance on complex problems containing dual enclosures, (a), and many homotopy classes, (a), across state dimension.
        The problem dimensions in (a) were chosen to make the gaps symmetric, i.e., $b=0.6$, and $l=2.8$, for the chosen wall thickness, $t=0.1$.
        The width of the individual obstacles in (b) are chosen such that the start and goal states are $5$ `columns' apart in a problem domain of size $l=4$.
        For both problems, the distance between the start and goal, $d$, is $1$.
    }
    \label{fig:exp:defn}
\end{figure}%

All planners used the same tuning parameters and configurations where possible.
Planning time was limited to $1$~seconds, $10$~seconds, $30$~seconds, and $100$~seconds in $\real^2$, $\real^4$, $\real^8$, and $\real^{16}$ and $20$~seconds and $600$~seconds for \ac{HERB} ($\real^{7}$ and $\real^{14}$), respectively.
\acs{RRT}-style planners used a goal-sampling bias of $5\%$ and a maximum edge length of $\maxEdge=0.3$, $0.5$, $0.9$, and $1.7$ on the abstract problems ($\real^2$, $\real^4$, $\real^8$, and $\real^{16}$) and $0.7$ and $1.3$ on \ac{HERB}, respectively.
These values were selected experimentally to reduce the time required to find an initial solution on simple training problems.

The \ac{RRTstar} planners, \ac{FMTstar}, and \ac{BITstar} all used a connection radius equal to twice their lower bound (e.g., $\rrrtstar = 2\rrrtstarmin$) and approximated the Lebesgue measure of the free space with the measure of the entire planning problem.
The \ac{RRTstar} planners also used the ordered rewiring technique presented in \citet{perez_iros11}.
Informed \acs{RRTstar}, \ac{SORRTstar}, and \ac{BITstar} used the $L^2$ norm as estimates of cost-to-come and cost-to-go, direct informed sampling, and delayed pruning the graph until solution cost changed by more than $5\%$.
\ac{SORRTstar} and \ac{BITstar} both used $\samplesPerBatch=100$ samples per batch for all problems. %
\ac{BITstar} also thresholded its initial connection radius by using the same radius for both the first and second batches.

\subsection{Simulated Planning Problems}\label{sec:exp:sim}
The algorithms were tested on simulated problems in $\real^2$, $\real^4$, $\real^8$, and $\real^{16}$ on problems consisting of dual enclosures (Section~\ref{sec:exp:sim:bug}), many different homotopy classes (Section~\ref{sec:exp:sim:reg}), and randomly generated obstacles (Section~\ref{sec:exp:sim:rand}).
The planners were tested with $100$ different pseudo-random seeds on each problem and state dimension.
The solution cost of each planner was recorded every $10^{-4}$~seconds by a separate thread and the median was calculated from the $100$ trials by interpolating each trial at a period of $10^{-4}$~seconds, except for the problems in $\real^{16}$ where $10^{-3}$~seconds was used for both times.
The absence of a solution was considered an infinite cost for the purpose of calculating the median and infinite values were not plotted.
\begin{figure*}[p]
    \centering
    \includegraphics[width=\textwidth]{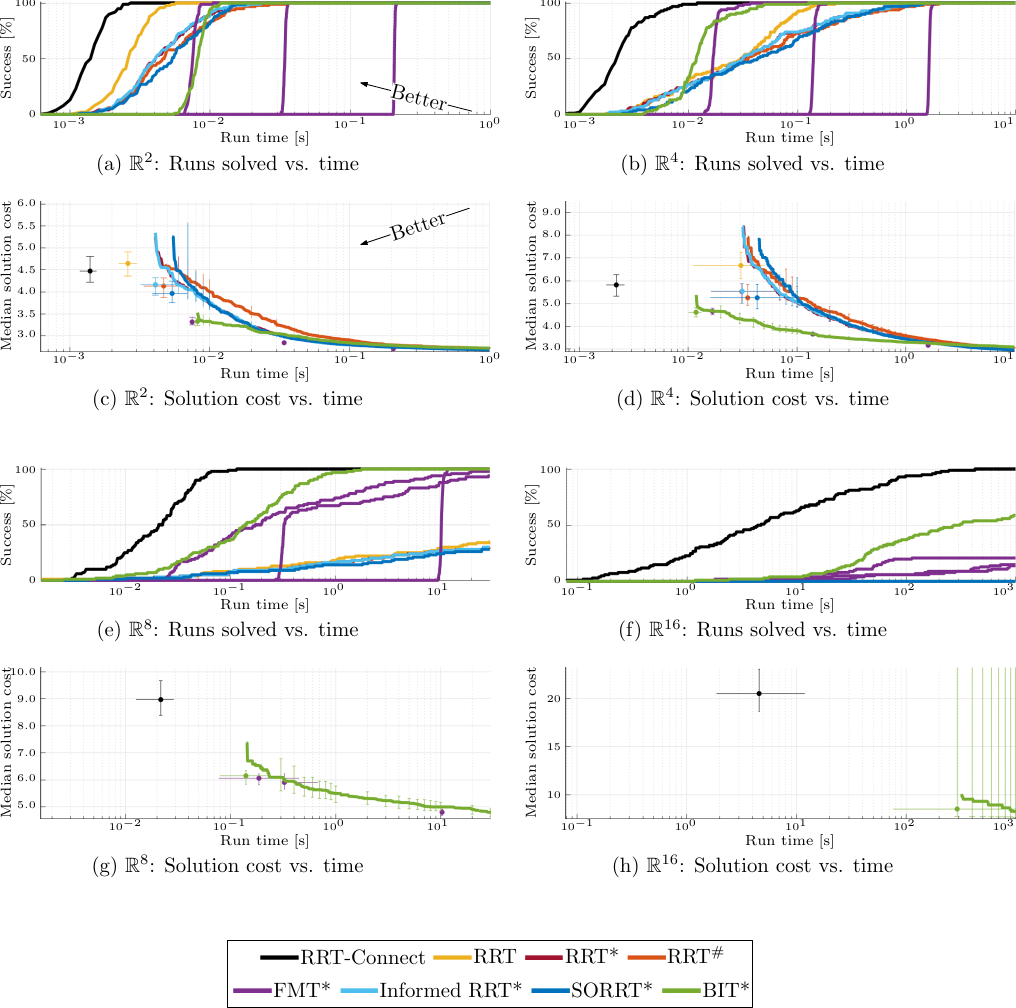}
    \caption[Planner performance versus time for the problem illustrated in Fig.~\ref{fig:exp:defn}a.]
    {%
        \figureCaptionStyle%
        Planner performance versus time for the problem illustrated in Fig.~\ref{fig:exp:defn}a.
        Each planner was run $100$ different times in $\real^2$, $\real^4$, $\real^8$, and $\real^{16}$ with run times limited to $1$, $10$, $30$, and $1000$ seconds, respectively. 
        The percentage of trials solved is plotted versus run time for each planner and presented in (a), (b), (e), and (f).
        The median path length is plotted versus run time for each planner and presented in (c), (d), (g), and (h), with unsuccessful trials assigned infinite cost.
        The error bars denote a nonparametric $99\%$ confidence interval on the median.
        The results show that \ac{BITstar} is competitive to other almost-surely asymptotically optimal planners in $\real^2$ and outperforms them in all other tested state dimensions.
        Note the difficulty of solving this problem in $\real^{8}$ and $\real^{16}$ in the available time.
    }
    \label{fig:exp:bugtrap}
\end{figure*}
\begin{figure*}[p]
    \centering
    \includegraphics[width=\textwidth]{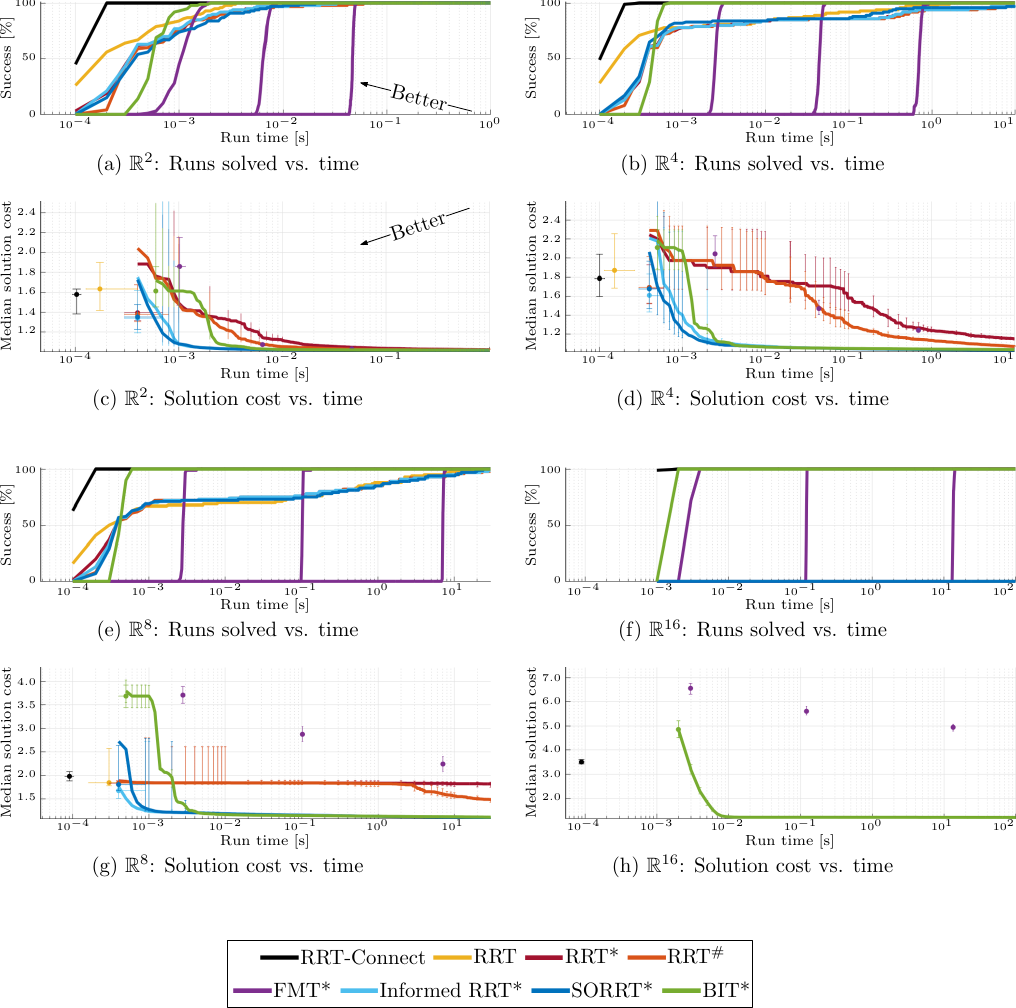}
    \caption[Planner performance versus time for the problem illustrated in Fig.~\ref{fig:exp:defn}b.]
    {%
        \figureCaptionStyle%
        Planner performance versus time for the problem illustrated in Fig.~\ref{fig:exp:defn}b.
        Each planner was run $100$ different times in $\real^2$, $\real^4$, $\real^8$, and $\real^{16}$ with run times limited to $1$, $10$, $30$, and $100$ seconds, respectively. 
        The percentage of trials solved is plotted versus run time for each planner and presented in (a), (b), (e), and (f).
        The median path length is plotted versus run time for each planner and presented in (c), (d), (g), and (h), with unsuccessful trials assigned infinite cost.
        The error bars denote a nonparametric $99\%$ confidence interval on the median.
        The results show that on this problem \ac{BITstar} finds a solution to all trials faster than all tested planners other than \acs{RRT}-Connect and outperforms other almost-surely asymptotically optimal planners in $\real^{16}$.
        Note that \acs{RRTstar}-based planners are not able to find solutions for this high state dimension in the time available.
    }
    \label{fig:exp:regular}
\end{figure*}
\begin{figure*}[p]
    \centering
    \includegraphics[width=\textwidth]{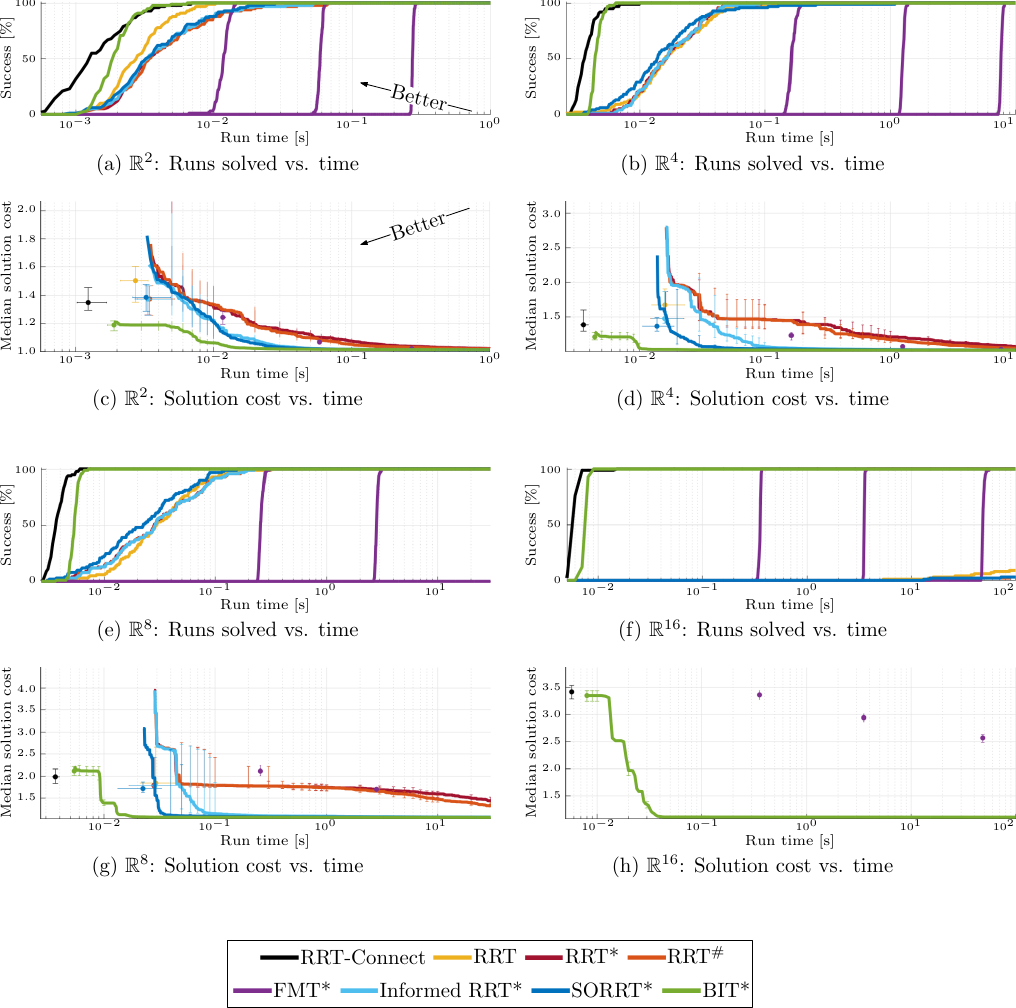}
    \caption[Planner performance versus time for a randomly generated problem.]
    {%
        \figureCaptionStyle%
        Planner performance versus time for a randomly generated problem.
        Each planner was run $100$ different times in $\real^2$, $\real^4$, $\real^8$, and $\real^{16}$ with run times limited to $1$, $10$, $30$, and $100$ seconds, respectively. 
        The percentage of trials solved is plotted versus run time for each planner and presented in (a), (b), (e), and (f).
        The median path length is plotted versus run time for each planner and presented in (c), (d), (g), and (h), with unsuccessful trials assigned infinite cost.
        The error bars denote a nonparametric $99\%$ confidence interval on the median.
        The results show that \ac{BITstar} outperforms other almost-surely asymptotically optimal planners in both ability to solve the problem and median solution cost.
        Note the increase in this difference with higher state dimension and the difficulty of solving the problem in $\real^{16}$ in the available time with \acs{RRTstar}-based planners.
    }
    \label{fig:exp:random}
\end{figure*}
\subsubsection{Dual-Enclosure Problems}\label{sec:exp:sim:bug}
The algorithms were tested on problems with two enclosures in $\real^2$, $\real^4$, $\real^8$, and $\real^{16}$ (Fig.~\ref{fig:exp:defn}a).
The problems consisted of a (hyper)cube of width $l=2.8$ with the start and goal located at $\left[-0.5,0,\ldots,0\right]^T$ and $\left[0.5,0,\ldots,0\right]^T$, respectively.
The enclosures were symmetric around the start and goal with a thickness of $t=0.1$ and openings of width $b=0.6$.
As all almost-surely asymptotically optimal planners struggled to solve the problem in $\real^{16}$, it was run for $1000$ seconds with recording and interpolation periods of $2\times10^{-3}$~seconds.

The results are presented in Fig.~\ref{fig:exp:bugtrap} with the percent of trials solved and the median solution cost plotted versus run time.
The results demonstrate the advantages of ordering the approximation and search of difficult planning problems.
\ac{BITstar} is competitive to other almost-surely asymptotically optimal planning algorithms in $\real^{2}$ and outperforms all algorithms other than \acs{RRT}-Connect in higher state dimensions.
In all dimensions, \ac{BITstar} finds a solution in every trial (i.e., attains $100\%$ success) sooner than the other anytime almost-surely asymptotically optimal planners.\squeezeWidowedWords

Specifically in $\real^2$, the median time required for \ac{BITstar} to find an initial solution is more than that of the \acs{RRTstar}-based planners (Fig.~\ref{fig:exp:bugtrap}c); however, once a solution is found, \ac{BITstar} finds better or equivalent solutions than the best performing \acs{RRTstar} planners at any given time.
\ac{FMTstar} slightly outperforms the other almost-surely asymptotically optimal planners but is not anytime.

The performance of \acs{RRTstar}-based planners decreases more rapidly with increasing state dimension on this problem than planners that process multiple samples such as \ac{BITstar} and \ac{FMTstar}.
\ac{BITstar} outperforms all planners other than \acs{RRT}-Connect in terms of success rate and median solution cost in $\real^{4}$ (Figs.~\ref{fig:exp:bugtrap}b and \ref{fig:exp:bugtrap}d).
This difference increases in $\real^{8}$ where the \ac{RRTstar}-based planners only find a solution in the available time in $35\%$ of the trials or less (Figs.~\ref{fig:exp:bugtrap}e and \ref{fig:exp:bugtrap}g).
All almost-surely asymptotically optimal planners struggle to solve the problem in $\real^{16}$ but \ac{BITstar} is the only one that finds a solution in more than $50\%$ of the trials (Figs.~\ref{fig:exp:bugtrap}f and \ref{fig:exp:bugtrap}h).
\squeezeWidowedWords

\subsubsection{Problems with Many Homotopy Classes}\label{sec:exp:sim:reg}
The algorithms were tested on problems with many homotopy classes in $\real^2$, $\real^4$, $\real^8$, and $\real^{16}$ (Fig.~\ref{fig:exp:defn}b).
The problems consisted of a (hyper)cube of width $l=4$ with the start and goal located at $\left[-0.5,0,\ldots,0\right]^T$ and $\left[0.5,0,\ldots,0\right]^T$, respectively.
The problem domain was filled with a regular pattern of axis-aligned (hyper)rectangular obstacles with a width such that the start and goal were $5$ `columns' apart.

\afterpage{\clearpage} %

The results are presented in Fig.~\ref{fig:exp:regular} with the percent of trials solved and the median solution cost plotted versus run time.
These results demonstrate both the advantages and disadvantages of attempting multiple connections per sample in problems with many disconnected obstacles.
In lower dimensions, informed algorithms that only make a single connection attempt per sample (e.g., Informed \ac{RRTstar} and \ac{SORRTstar}) have better median solution costs at any given time.
In higher dimensions, only the bidirectional \acs{RRT}-Connect and algorithms that make multiple connection attempts per sample (e.g., \ac{BITstar} and \ac{FMTstar}) find solutions in a reasonable amount of time.
In all dimensions, \ac{BITstar} finds a solution in every trial (i.e., attains $100\%$ success) sooner than every planner tested other than \acs{RRT}-Connect.

Specifically, \ac{BITstar} is the most likely almost-surely asymptotically optimal planning algorithm to solve the problem in $\real^{2}$, $\real^{4}$, and $\real^{8}$ (Figs.~\ref{fig:exp:regular}a, \ref{fig:exp:regular}b, and \ref{fig:exp:regular}e) but the informed, \acs{RRTstar}-based algorithms have better median solution costs (Figs.~\ref{fig:exp:regular}c, \ref{fig:exp:regular}d, \ref{fig:exp:regular}g).
This advantage disappears in $\real^{16}$ where \ac{BITstar} is the only anytime almost-surely asymptotically optimal planner to always solve the planning problem in the given time (Figs.~\ref{fig:exp:regular}f and \ref{fig:exp:regular}h).

This difference in performance may arise from the relative difficulty of planning tasks in different state dimensions.
It appears that in lower state dimensions the main challenge of this problem is avoiding obstacles.
In these situations, informed, \ac{RRTstar}-based algorithms will outperform \ac{BITstar} as they only make one connection attempt per sample.
In higher state dimensions, it appears that the exponential increase in problem measure (i.e., the curse of dimensionality) makes navigating towards the goal an equal challenge and \ac{BITstar} outperforms these other algorithms by considering multiple connection attempts per sample in an ordered fashion.

\subsubsection{Random Problems}\label{sec:exp:sim:rand}
The planners were tested on randomly generated problems in $\real^2$, $\real^4$, $\real^8$, and $\real^{16}$.
The worlds consisted of a (hyper)cube of width $l=2$ populated with approximately $75$ random axis-aligned (hyper)rectangular obstacles that obstruct at most one third of the environment.

For each state dimension, $10$ different random worlds were generated and the planners were tested on each with $100$ different pseudo-random seeds.
The true optima for these $10$ problems are different and unknown and there is no meaningful way to compare the results across problems.
Results from a representative problem are instead presented in Fig.~\ref{fig:exp:random} with the percent of trials solved and the median solution cost plotted versus computational time.

These experiments show that \ac{BITstar} generally finds better solutions faster than other sampling-based optimal planners and \ac{RRT} on these types of problems regardless of the state dimension.
It has a higher likelihood of having found a solution at a given computational time (Figs.~\ref{fig:exp:random}a, \ref{fig:exp:random}b, \ref{fig:exp:random}e, and \ref{fig:exp:random}f), and converges faster towards the optimum (Figs.~\ref{fig:exp:random}c, \ref{fig:exp:random}d, \ref{fig:exp:random}g, and \ref{fig:exp:random}h), with the relative improvement increasing with state dimension.
The only tested planner that found solutions faster than \ac{BITstar} was \acs{RRT}-Connect, a nonanytime planner that cannot converge to the optimum.

\begin{figure*}[p]
    \centering
    \includegraphics[width=\textwidth]{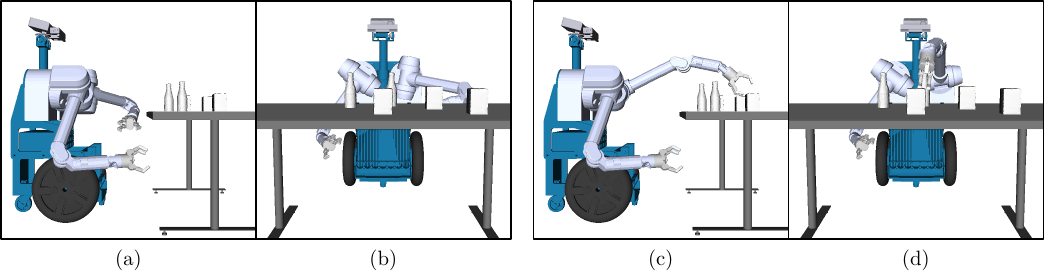}
    \caption[A one-armed motion planning problem for \acs{HERB} in $\real^7$.]
    {%
        \figureCaptionStyle%
        A one-armed motion planning problem for \acs{HERB} in $\real^7$.
        Starting at a position level with the table, (a) and (b), \acs{HERB}'s left arm must be moved in preparation for grasping a box on the far side of the table, (c) and (d).
    }
    \label{fig:exp:herb:1arm}
\end{figure*}
\begin{figure*}[p]
    \centering
    \includegraphics[width=\textwidth]{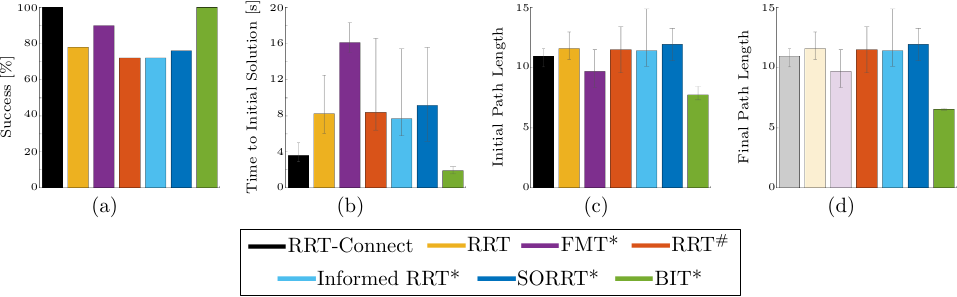}
    \caption[Results from $50$, $5$~second trials on the one-armed \acs{HERB} planning problem shown in Fig.~\ref{fig:exp:herb:1arm}.]
    {%
        \figureCaptionStyle%
        Results from $50$, $20$~second trials on the one-armed \acs{HERB} planning problem shown in Fig.~\ref{fig:exp:herb:1arm}.
        The percent of solutions solved, (a), the median time to an initial solution, (b), the median initial path length, (c), and the median final path length, (d), are presented with $99\%$ confidence intervals for each planner.
        Unsuccessful trials were assigned infinite time and cost.
        The inability of nonanytime planners (e.g., \acs{RRT}, \acs{RRT}-Connect, and \acs{FMTstar}) to use the remaining available time to improve their initial solution is denoted with diminished colour in (d), where present.
        \acs{BITstar} is the only almost-surely asymptotically optimal planner to solve all $50$ trials and does so in a time comparable to \acs{RRT}-Connect.
        It also finds significantly lower cost paths then all the other planners.
    }
    \label{fig:exp:herb:1arm:bars}
\end{figure*}
\begin{figure*}[p]
    \centering
    \includegraphics[scale=1]{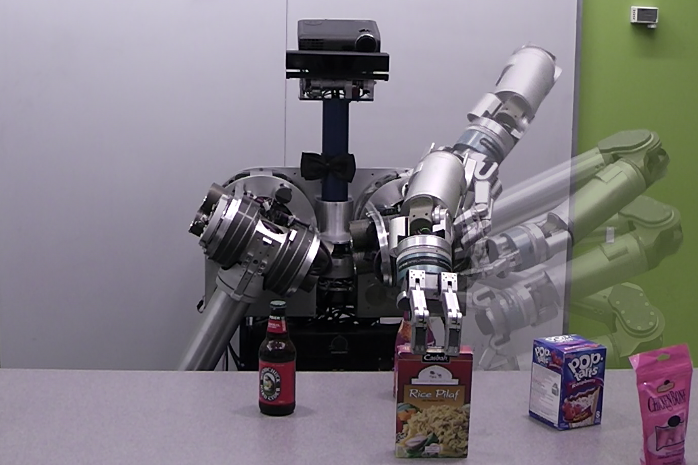}%
    \caption[A composite figure of \acs{HERB} executing a path found by \ac{BITstar} on a one-armed planning problem similar to Fig.~\ref{fig:exp:herb:1arm}.]
    {%
        \figureCaptionStyle%
        A composite figure of \acs{HERB} executing a path found by \ac{BITstar} on a one-armed planning problem similar to Fig.~\ref{fig:exp:herb:1arm}.
    }
    \label{fig:exp:herb:1arm:pic}
\end{figure*}%
\begin{figure*}[p]
    \centering
    \includegraphics[width=\textwidth]{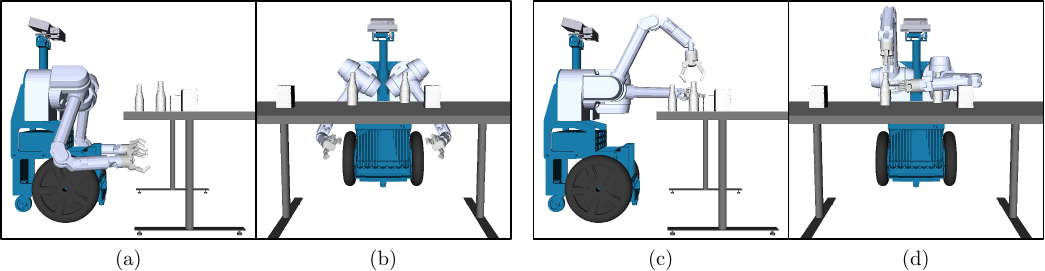}
    \caption[A two-armed motion planning problem for \acs{HERB} in $\real^{14}$.]
    {%
        \figureCaptionStyle%
        A two-armed motion planning problem for \acs{HERB} in $\real^{14}$.
        Starting under the table, (a) and (b), \acs{HERB}'s arms must be moved in preparation for opening a bottle, (c) and (d).
    }
    \label{fig:exp:herb:2arm}
\end{figure*}
\begin{figure*}[p]
    \centering
    \includegraphics[width=\textwidth]{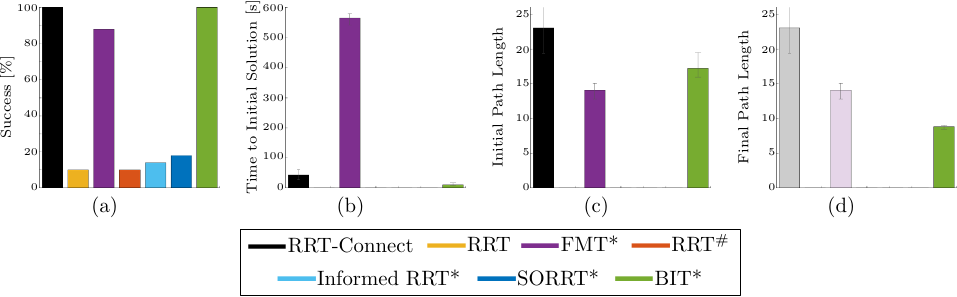}
    \caption[Results from $50$, $600$~second trials on the two-armed \acs{HERB} planning problem shown in Fig.~\ref{fig:exp:herb:2arm}.]
    {%
        \figureCaptionStyle%
        Results from $50$, $600$~second trials on the two-armed \acs{HERB} planning problem shown in Fig.~\ref{fig:exp:herb:2arm}.
        The percent of solutions solved, (a), the median time to an initial solution, (b), the median initial path length, (c), and the median final path length, (d), are presented with $99\%$ confidence intervals for each planner.
        Unsuccessful trials were assigned infinite time and cost.
        The inability of nonanytime planners (e.g., \acs{RRT}, \acs{RRT}-Connect, and \acs{FMTstar}) to use the remaining available time to improve their initial solution is denoted with diminished colour in (d), when present.
        \acs{BITstar} is the only anytime planner to solve all $50$ trials and does so in a time comparable to \acs{RRT}-Connect.
        It focuses the search to the informed set once a solution is found and the resulting increasingly dense \ac{RGG} allows it to find lower cost paths then all the other planners.
    }
    \label{fig:exp:herb:2arm:bars}
\end{figure*}
\begin{figure*}[p]
    \centering
    \includegraphics[scale=1]{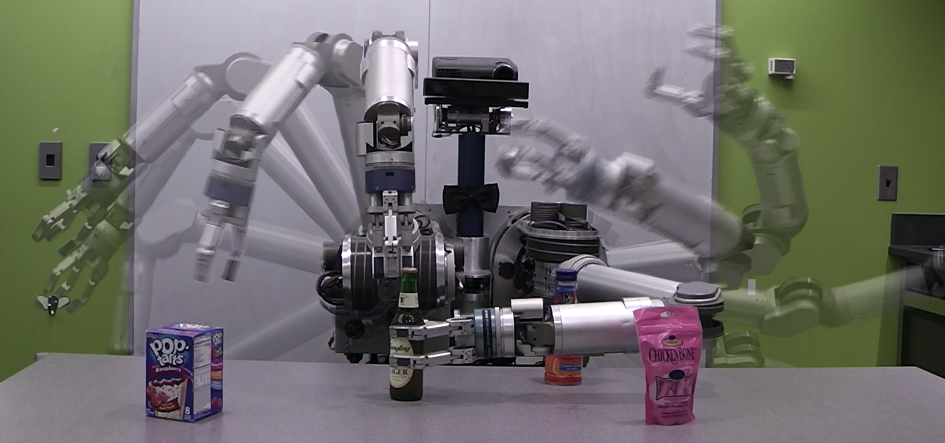}%
    \caption[A composite figure of \acs{HERB} executing a path found by \ac{BITstar} on a two-armed planning problem similar to Fig.~\ref{fig:exp:herb:2arm}.]
    {%
        \figureCaptionStyle%
        A composite figure of \acs{HERB} executing a path found by \ac{BITstar} on a two-armed planning problem similar to Fig.~\ref{fig:exp:herb:2arm}.
    }
    \label{fig:exp:herb:2arm:pic}
\end{figure*}%
\subsection{Path Planning for \acs{HERB}}\label{sec:exp:herb}
It is difficult to capture the challenges of actual high-dimensional planning in abstract worlds.
Two planning problems inspired by manipulation scenarios were created for \ac{HERB}, a 14-\ac{DOF} mobile manipulation platform.

Start and goal poses were chosen for one arm ($7$ \acp{DOF}, Section~\ref{sec:exp:herb:1arm}) and two arms ($14$ \acp{DOF}, Section~\ref{sec:exp:herb:2arm}) to define planning problems with the objective of minimizing path length through configuration space.
They were used to compare the \ac{OMPL} versions of \ac{RRT}, \acs{RRT}-Connect, \ac{FMTstar}, \RRTsharp{} (i.e., \RRTx{} with $\epsilon = 0$), Informed \acs{RRTstar}, \ac{SORRTstar},  and \ac{BITstar}.
The planners were run on each problem $50$ times while recording success rate, initial solution time and cost, and final cost.
Trials that did not find a solution were considered to have taken infinite time and have infinite path length, respectively, for the purpose of calculating medians.
The number of \ac{FMTstar} samples for both problems was chosen to use the majority of the available computational time.

\subsubsection{A One-Armed Planning Problem}\label{sec:exp:herb:1arm}
A planning problem was defined for \acs{HERB}'s left arm around a cluttered table (Fig.~\ref{fig:exp:herb:1arm}).
The arm starts folded at the elbow and held at approximately the level of the table (Figs.~\ref{fig:exp:herb:1arm}a and \ref{fig:exp:herb:1arm}b) and must be moved into position to grasp a box (Figs.~\ref{fig:exp:herb:1arm}c and \ref{fig:exp:herb:1arm}d).
The planners were given $20$~seconds of computational time to solve this $7$-\ac{DOF} problem with the objective of minimizing path length in configuration space.
\acs{FMTstar} used $\samplesPerBatch = 30$ samples.

The percentage of trials that successfully found a solution (Fig.~\ref{fig:exp:herb:1arm:bars}a), the median time and cost of the initial solution (Figs.~\ref{fig:exp:herb:1arm:bars}b and \ref{fig:exp:herb:1arm:bars}c) and the final cost (Fig.~\ref{fig:exp:herb:1arm:bars}d) were plotted for each planner.
Infinite values are not plotted.

The results show \ac{BITstar} finds solutions more often than other almost-sure asymptotically optimal planners on this problem (Fig.~\ref{fig:exp:herb:1arm:bars}a) and also finds better solutions faster than all tested algorithms, including \acs{RRT}-Connect (Figs.~\ref{fig:exp:herb:1arm:bars}b--\ref{fig:exp:herb:1arm:bars}d).
Fig.~\ref{fig:exp:herb:1arm:pic} presents a composite photograph of \ac{HERB} executing a path found by \ac{BITstar} for a similar problem.

\subsubsection{A Two-Armed Planning Problem}\label{sec:exp:herb:2arm}
A second planning problem was defined for both of \acs{HERB}'s arms moving around a cluttered table (Fig.~\ref{fig:exp:herb:2arm}).
The arms start at a neutral position with their forearms extended under the table (Figs.~\ref{fig:exp:herb:2arm}a and \ref{fig:exp:herb:2arm}b) and must be moved into position to open a bottle (Figs.~\ref{fig:exp:herb:2arm}c and \ref{fig:exp:herb:2arm}d).
The planners were given $600$~seconds of computational time to solve this $14$-\ac{DOF} problem with the objective of minimizing path length in configuration space.
\acs{FMTstar} used $\samplesPerBatch = 1750$ samples.

The percentage of trials that successfully found a solution (Fig.~\ref{fig:exp:herb:2arm:bars}a), the median time and cost of the initial solution (Figs.~\ref{fig:exp:herb:2arm:bars}b and \ref{fig:exp:herb:2arm:bars}c) and the final cost (Fig.~\ref{fig:exp:herb:2arm:bars}d) were plotted for each planner.
Infinite values are not plotted.

The results show that even when more computational time is available, \ac{BITstar} still finds solutions more often than other almost-surely asymptotically optimal planners (Fig.~\ref{fig:exp:herb:2arm:bars}a) and also finds initial solutions faster than all tested algorithms, including \acs{RRT}-Connect (Fig.~\ref{fig:exp:herb:2arm:bars}b).
As a nonanytime algorithm, \acs{FMTstar} is tuned to use the majority of the available time and finds a better initial solution (Fig.~\ref{fig:exp:herb:2arm:bars}c) but as \ac{BITstar} is able to improve its solution it still finds a better final path after approximately the same amount of time (Fig.~\ref{fig:exp:herb:2arm:bars}d).
Fig.~\ref{fig:exp:herb:2arm:pic} presents a composite photograph of \ac{HERB} executing a path found by \ac{BITstar} for a similar problem.
\squeezeWidowedWords

\section{Discussion \& Conclusion}\label{sec:fin}
Most planning algorithms discretize the search space of continuous path planning problems.
Popular approaches in robotics include \textit{a priori} graph-{} or anytime sampling-based approximations.
Both of these approaches are successful but have important limitations.

\textit{A priori} graphs approximate a problem before it is searched.
Doing so `correctly' is challenging since the relationship between resolution and search performance depends on the specific features of a planning problem (e.g., the size and arrangement of obstacles).
If the chosen approximation is insufficient (e.g., a sparse graph) it may preclude finding a (suitable) solution but if it is excessive (e.g., a dense graph) it may make finding a solution prohibitively expensive.

Graph-based searches are effective path planning techniques despite these limitations.
This is because informed algorithms, such as A*, use heuristics to order their search by potential solution quality.
This not only finds the optimal solution to the given representation (i.e., it is \textit{resolution optimal}) but does so by expanding the minimum number of vertices for the chosen heuristic \citeParenthetical{i.e., it is \textit{optimally efficient}}{hart_tssc68}.

Anytime sampling-based planners alternatively build approximations that increase in resolution.
This avoids the need to select a representation \textit{a priori} and allows them to be run indefinitely until a (suitable) solution is found.
\ac{RRTstar} has a unity probability of finding a solution, if one exists, with an infinite number of samples (i.e., it is \textit{probabilistically complete}) and finds continuously improving solutions \citeParenthetical{i.e., it is \textit{almost-surely asymptotically optimal}}{karaman_ijrr11}.

This has made them effective planning techniques despite the fact their search is often inefficient.
Incremental sampling-based planners tend to explore the entire problem domain equally (i.e., they are \textit{space filling}).
This wastes computational effort on regions of the problem domain that are not needed to find the solution and can be prohibitively expensive in large planning problems and/or high state dimensions.

Previous attempts to unify these two approaches have been incomplete.
They either sacrifice anytime resolution (e.g., \ac{RAstar} and \ac{FMTstar}), order the search on metrics other than solution cost and waste computational effort (e.g., \ac{SBAstar}), or do not order all aspects of the problem domain search (e.g., \RRTsharp{} and \RRTx{}).
This paper demonstrates that these tradeoffs are unnecessary and that anytime sampling-based planners can be directly and solely ordered by a heuristic estimate of solution cost.

\ac{BITstar} directly unifies informed graph-based search and sampling-based planning (Section~\ref{sec:bitstar}).
It uses heuristics and batches of random samples to simultaneously build anytime approximations of continuous planning problems and search these approximations in order of potential solution quality.
This avoids the computational costs of both searching an improper approximation (graph-based search) and performing an unordered search of the problem domain (sampling-based planning).
A version of \ac{BITstar} is publicly available in \ac{OMPL}.

\ac{BITstar} approximates the search space by using \emph{batches} of samples to define increasingly dense implicit \acp{RGG}.
Building this approximation with batches of \emph{multiple samples} allows each search to be ordered by potential solution quality, as in A*.
Building this approximation from \emph{multiple batches} of samples allows it to be improved indefinitely until it contains a suitable solution, as in \ac{RRTstar}.
As a result, \ac{BITstar} is probabilistically complete and almost-surely asymptotically optimal (Section~\ref{sec:anal}).

\begin{figure*}[tp]
    \centering
    \includegraphics[width=\textwidth]{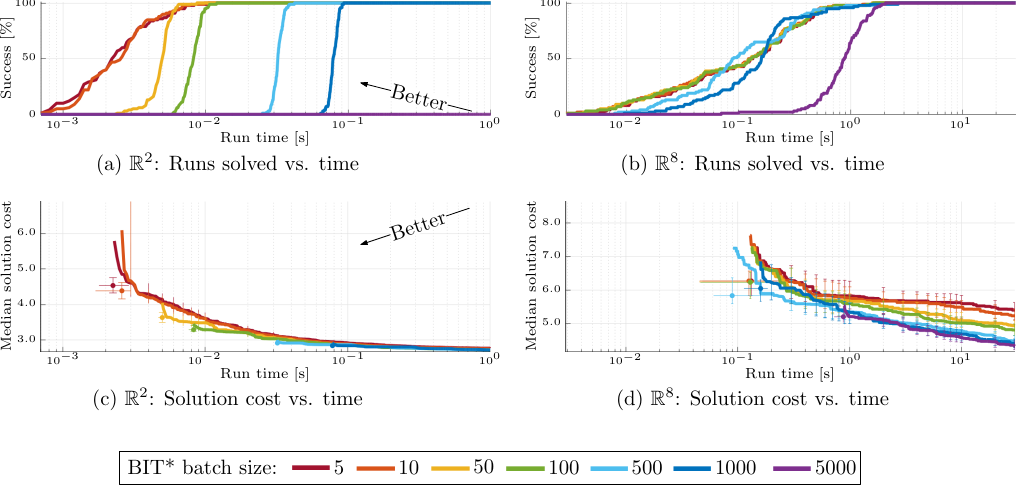}
    \caption[The performance versus time of \ac{BITstar} with various batch sizes on the problem illustrated in Fig.~\ref{fig:exp:defn}a.]
    {%
        \figureCaptionStyle%
        Planner performance versus time of \ac{BITstar} with various batch sizes on the problem illustrated in Fig.~\ref{fig:exp:defn}a.
        Each configuration was run $100$ different times in $\real^2$ and $\real^8$ with run times limited to $1$ and $30$ seconds, respectively. 
        The percentage of trials solved is plotted versus run time and presented in (a) and (b).
        The median path length is plotted versus run time and presented in (c) and (d) with unsuccessful trials assigned infinite cost.
        The error bars denote a nonparametric $99\%$ confidence interval on the median.
        It appears that decreasing the batch size decreases the median solution time towards a problem-specific threshold.
        The median initial solution time was the same in $\real^{2}$ for batch sizes of $5$ and $10$ samples but progressively higher for larger batches, (a).
        It was equivalent for batches of $5$, $10$, $50$, and $100$ samples in $\real{4}$ (not shown) and for all tested batch sizes other than $5000$ in $\real^{8}$, (b).
        Decreasing the batch size also appears to decrease the rate of convergence towards the optimum, with the effect becoming more pronounced in higher state dimensions, (d).
        It is not clear how universal these relationships are between obstacle configurations.
    }
    \label{fig:fin:batch}
\end{figure*}

This simultaneous approximation and search is done efficiently by using heuristics.
The approximation is focused to the regions of the planning problem that could provide better solutions, as in Informed \acs{RRTstar}.
The denser approximations are searched efficiently by reusing previous information, as in \ac{TLPAstar}.
Edge calculations (e.g., \acp{2BVP} and collisions checks) are delayed until necessary, as in \textit{lazy} versions of both graph-based searches and sampling-based planners \citeExample{bohlin_icra00,sanchez_ijrr02,helmert_jair06,branicky_icra01,cohen_rss14,hauser_icra15,salzman_tro16}.

A brief set of extensions to \ac{BITstar} are presented (Section~\ref{sec:mods}).
These include prioritizing an initial solution, avoiding the need to define \textit{a priori} search limits in unbounded problems, and avoiding unreachable areas of the problem domain.
These ideas also motivate the development of \ac{SORRTstar} as an extension of batch-ordered search to the algorithmic simplicity of \ac{RRTstar} (\algo{algo:sorrtstar}).
A version of \ac{SORRTstar} is publicly available in \ac{OMPL}.

The benefits of \ac{BITstar} are demonstrated experimentally on abstract planning problems and simulated experiments for \ac{HERB} (Section~\ref{sec:exp}).
The results highlight the advantages and disadvantages of both using an ordered search and considering multiple connections per sample.
As state dimension increases, \ac{BITstar} becomes more likely to have found a solution and generally finds better solutions faster than the other almost-surely asymptotically optimal planners.
\squeezeWidowedWords

The experiments also highlight the relative sensitivity of anytime planners to their tuning parameters.
The performance of \acs{RRT}-style planners depends heavily on the maximum edge length, $\maxEdge$, and achieving the best performance requires tuning it for the problem size, dimension, and even obstacle characteristics.
Alternatively, the same batch size was used for \ac{BITstar} on all the tested problems even though further tuning on specific problems could provide better performance (Fig.~\ref{fig:fin:batch}).
This result should motivate future research on more advanced sample addition procedures, including variable and adaptive batch sizes.
\squeezeWidowedWords

Using heuristics to avoid unnecessary edge evaluations allows \ac{BITstar} to spend more computational effort on the edges that are evaluated.
\citet{xie_icra15} show that a \ac{2BVP} solver can be used to calculate edges for \ac{BITstar} for problems with differential constraints.
They find that doing so is competitive to state-of-the-art optimal sampling-based techniques that are explicitly designed to avoid solving \acp{2BVP}.
\citet{choudhury_icra16} show that a path optimizer \citeParenthetical{i.e., \acs{CHOMP}}{zucker_ijrr13} can be used on potential edges in \ac{BITstar}.
This provides a method to exploit local problem information (i.e., cost gradients) to propose higher-quality edges and improve performance.

\ac{BITstar} is described as using a \acs{LPAstar} \emph{ordering} to efficiently search an incrementally built (i.e., changing) \ac{RGG} embedded in a continuous planning problem.
While the search order is the same, it is important to note a key difference in how these two algorithms reuse information.
When \ac{LPAstar} updates the cost-to-come of a vertex it reconsiders the cost-to-come of all possibly descendent vertices. %
This can be prohibitively expensive in large graphs and the results of \ac{RRTstar} demonstrate that this propagation is unnecessary for a planner to almost-surely converge asymptotically to the optimum.
By not propagating these changes, \ac{BITstar} performs a truncated rewiring similar to \ac{TLPAstar}.
\squeezeWidowedWords

This paper demonstrates the benefits of unifying informed graph-based search and sampling-based planning.
Using incremental search techniques to efficiently search an increasingly dense \ac{RGG} allows \ac{BITstar} to outperform existing anytime almost-surely asymptotically optimal planners.
These results will hopefully motivate further research into combining graph-based search and sampling-based planning.
Of particular interest would be probabilistic statements about search efficiency analogous to the formal statements for A*.

There is also a clear opportunity to consider different anytime approximations, such as deterministic sampling \citep{janson_ijrr18} or adaptive meshes \citep{yershov_ijrr16}, and more advanced graph-based-search techniques, such as \citeAcronymDefn{ARAstar}{likhachev_ai08} and \citeAcronymDefn{MHAstar}{aine_ijrr15}, to further accelerate the search performance of \ac{BITstar}.
\squeezeWidowedWords%

\begin{acks}
We would like to thank the editorial board for considering this manuscript and the reviewers for their detailed comments and dedicated efforts to improve it.
We would also like to thank Christopher Dellin, Michael Koval, and Jennifer King for help running the \acs{HERB} experiments.
\end{acks}

\begin{funding}
This research was supported by contributions from the \ac{NSERC} through the \ac{NCFRN}, the Ontario Ministry of Research and Innovation's Early Researcher Award Program, and the \ac{ONR} Young Investigator Program.
\end{funding}

\bibliographystyle{SageH}
\bibliography{TR-2014-JDG008}
\end{document}